\theoremstyle{plain}
\newtheorem{theorem}{Theorem}
\theoremstyle{remark}
\newtheorem{definition}[theorem]{Definition}
\definecolor{DarkGreen}{rgb}{0.075,0.375,0.075}
\definecolor{DarkRed}{rgb}{0.5,0.1,0.1}
\definecolor{DarkBlue}{rgb}{0.1,0.1,0.5}
\definecolor{Gray}{rgb}{0.2,0.2,0.2}
\newcommand{\thetaPS}{{\theta_{\mathrm{PS}}}}
\newcommand{\thetaPO}{{\theta_{\mathrm{PO}}}}
\newcommand{\R}{\mathbb{R}}
\newcommand{\risk}{\mathrm{Risk}}
\newcommand{\PR}{\mathrm{PR}}
\newcommand{\cW}{\mathcal{W}}
\newcommand{\cA}{\mathcal{A}}
\newcommand{\cP}{\mathcal{P}}
\newcommand{\cD}{\mathcal{D}}
\newcommand{\cX}{\mathcal{X}}
\newcommand{\cY}{\mathcal{Y}}
\newcommand{\cU}{\mathcal{U}}
\newcommand{\cF}{\mathcal{F}}
\newcommand{\distance}{\mathrm{dist}}
\newcommand{\PP}{\mathrm{P}}
\DeclareMathOperator*{\argmin}{arg\,min}
\DeclareMathOperator*{\argmax}{arg\,max}
\DeclareMathOperator*{\E}{\mathbb{E}}
\title{Performative Prediction: Past and Future}
\newcommand{\printfnsymbol}[1]{%
  \textsuperscript{\@fnsymbol{#1}}%
}
\author[1]{Moritz Hardt}
\author[1,2]{Celestine Mendler-Dünner}
\affil[1]{Max-Planck Institute for Intelligent Systems, Tübingen and Tübingen AI Center}
\affil[2]{ELLIS Institute, Tübingen}
\date{}                     
\begin{document}

\maketitle

\begin{abstract}
Predictions in the social world generally influence the target of prediction, a phenomenon known as performativity. Self-fulfilling and self-negating predictions are examples of performativity. Of fundamental importance to economics, finance, and the social sciences, the notion has been absent from the development of machine learning that builds on the static perspective of pattern recognition. In machine learning applications, however, performativity often surfaces as distribution shift. A predictive model deployed on a digital platform, for example, influences behavior and thereby changes the data-generating distribution. We discuss the recently founded area of performative prediction that provides a definition and conceptual framework to study performativity in machine learning. A key element of performative prediction is a natural equilibrium notion that gives rise to new optimization challenges. What emerges is a distinction between learning and steering, two mechanisms at play in performative prediction. Steering is in turn intimately related to questions of power in digital markets. The notion of performative power that we review gives an answer to the question how much a platform can steer participants through its predictions. We end on a discussion of future directions, such as the role that performativity plays in contesting algorithmic systems.
\end{abstract}

\section{Introduction}

Long before his work with Von Neumann that founded the field of game theory, economist Oskar Morgenstern studied what he called one of the most difficult and most central problems in prediction. Emboldened by the advances of statistics in the 1920s, many of Morgenstern's contemporaries were eager to apply the new statistical machinery to the problem of charting the course of the economy. Morgenstern believed that
this was a fool's errand. Economic forecasting, he argued in his century-old habiliation, was impossible with the tools of economic theory and statistics alone~\citep[p.~112]{morgenstern28}.

Morgenstern had identified a compelling reason for his pessimistic outlook on prediction. Any economic forecast, published with authority and reach, would necessarily cause economic activity that would influence the outcomes that the forecast aimed to predict. This causal relationship between a prediction and its target, Morgenstern held, necessarily invalidated economic forecasts. In his argument, Morgenstern emphasized the difference between forecasting natural events and forecasting social events. He believed the problem that clouded economic forecasts was fundamental to predictions about the social world at large. 

We call the phenomenon Morgenstern so accurately described
\emph{performativity}. It refers to a causal influence that predictions have on the target of prediction. The empirical reality of this phenomenon is not limited to economic forecasts. The predictions of a content recommendation model on a digital platform are another good example. If the model predicts that a visitor will like a video and thus displays it prominently, the visitor is more likely to click and watch the video. Content recommendations, therefore, can be self-fulfilling prophecies. Traffic predictions, on the other hand, can be self-negating.  If
the service predicts that traffic is low on a certain route, drivers will switch over and increase traffic.

Does Morgenstern's argument doom prediction in the social world to guesswork with unforeseeable consequences? 

An attempt at a formal counterpoint to Morgenstern's argument came thirty years later in a paper by Emile Grunberg and Franco Modigliani, and in a contemporaneous work by Herbert Simon. \cite{grunberg54} studied prices, whereas \cite{simon54} turned to bandwagon and underdog
effects in election forecasts. The three scholars argued that it is in
principle possible to find a prediction that equals the outcome caused by the
prediction. All that is needed is the continuity of the function that relates
predictions to outcomes.

The work of Grunberg and Modigliani marked the start of a revolution in economic
theory that some hoped would solve the issue of performativity altogether \citep[p.~316]{muth1961rational}; \citep[pp.~51--52]{sent1998evolving}. But the
economic theorizing around performativity was lost on the development of
statistics and machine learning. Until recently, the theoretical foundations
of statistics and machine learning excluded the kind of feedback loop between model and data that characterizes performativity. The predominant risk minimization formulation of machine learning assumes an immutable data-generating process impervious to any model's predictions.

\subsection{Contributions and outline}

This article provides an orientation to help readers access and navigate the emerging area of performative prediction founded by \cite{PZMH20}. At the outset, performative prediction allows a chosen model to have an influence on the data-generating process. It retains all other aspects of the familiar risk formulation of supervised learning. Our goal is to put the many individual contributions to the area of performative prediction into a broader unified perspective, synthesizing material from different sources. 

We motivate the technical sections with a simple exposition of the results by Grunberg, Modigliani, and Simon from the 1950s. Section~\ref{sec:performative-prediction} presents the main framework of performative prediction. Sections~\ref{sec:retraining}--\ref{sec:model-based} give a simplified exposition of key optimization results in the area, distinguishing between model-free and model-based results. Starting with Section~\ref{sec:power} we consider the fundamental role that power plays in the context of performative prediction. We discuss the notion of \emph{performative power} and its potential to inform ongoing antitrust investigations. Section~\ref{sec:collective-action} presents the recently studied problem of algorithmic collective action, 
where we add a novel result connecting performative power and collective action.

\paragraph*{Acknowledgments.}
The primary content of this survey is based on several joint works with Meena Jagadeesan, Juan Carlos Perdomo, and Tijana Zrnic.

\section{Motivation: the GMS theorem}
\label{sec:GMStheorem}

\cite{grunberg54} distinguished between \emph{private} and \emph{public} predictions. Private predictions have no causal powers, whereas public predictions can alter the course of events. They raised a fundamental question: Under what conditions will a public prediction, even though it may change the turn of events, still come true?

To study this question formally, assume we aim to predict an outcome $y$. Assume $y$ is bounded, and without loss of generality, let $y\in[0,1]$. Denote the prediction $\hat y$. A perfect prediction corresponds to the case that $\hat y = y$. We express the relationship between the prediction and the outcome it causes through the response function
\begin{equation}
y = R(\hat y)\,.
\end{equation}
Thus, the question of whether public prediction is possible amounts to asking whether there exists a prediction for which $\hat y = R(\hat y)$. Grunberg, Modigliani, and Simon were the first to provide a positive answer. They identified continuity of $R$ as a sufficient condition for the feasibility of correct public prediction under performativity. Their result follows from  Brouwer's fixed point theorem. The one-dimensional case relevant here is just the intermediate value theorem from calculus. Let us illustrate the key argument in Figure~\ref{fig:GMS}.

\begin{figure}[h!]
\begin{center}
\begin{tikzpicture}
\fill [black!5!white] (0,0) rectangle (3.6,3.6);
\draw[thick,->] (0,0) -- (4,0) node[anchor=north west] {predicted value $\hat y$};
\draw[ thick,->] (0,0) -- (0,4) node[anchor=south ] {outcome $y$};
\draw[thick,dashed] (0,0) -- (3.7,3.7) node[anchor=south ] {perfect prediction};
\draw[thick,dotted] (1.95,0) -- (1.95,1.95);
\draw[thick,dotted] (0,1.95) -- (1.95,1.95);
\draw (1.95,0) node [anchor=north]{$y^*$};
\draw (0,1.95) node [anchor=east]{$y^*$};
\draw (0,1.2) node [anchor=east]{$R(0)$};
\draw (3.6,2.8) node [anchor=west]{$R(1)$};
\draw (3.6,0.1) -- (3.6,-0.1) node [anchor=north]{$1$};
\draw (0.1,3.6) -- (-0.1,3.6) node [anchor=east]{$1$};
\draw[ultra thick] (0,1.2) .. controls (1,2.6) and (2.5,1.3) .. (3.6,2.8);
\end{tikzpicture}
\end{center}
\caption{Simon's argument for the existence of stable points.}
\label{fig:GMS}
\end{figure}

First, draw a point anywhere on the $y$-axis, representing the realized outcome $R(0)$ when the prediction is $\hat y=0$. Then, mark a second point on the vertical line $\hat y=1$, representing the realized outcome $R(1)$. Under the constraint that your pencil may not leave the square it is impossible to connect the two points without touching the dashed line or lifting the pencil. Thus, for any continuous relationship between $y$ and $\hat y$ there must be at least one point~$y^*$ for which~$y^* = R(y^*)$. Thus, $y^*$ comes true \emph{after} being published.

\section{Performative prediction}
\label{sec:performative-prediction}

\cite{PZMH20} proposed a formalism to capture performativity in the context of supervised machine learning. Compared to the economic setting from the previous section, in machine learning predictions take instance-level features~$x$ into account and come from a predictive model~$f_\theta$ parameterized by a vector $\theta\in\Theta$. The predictive model takes a feature vector $x\in \cX$ as input and maps it to a prediction $\hat y=f_\theta(x)\in\cY$. Let $\cX\subseteq \mathbb R^d$ denote the feature space and $\cY\subseteq \mathbb R$ denote the output space. We assume model parameters are chosen from a closed and convex parameter space $\Theta\subseteq \R^d$.
A predictive model is generally deployed across a population to simultaneously make predictions for multiple individuals. The setting applies to both regression and classification problems.

Following the standard risk formulation of supervised learning, we assess the fit of a model by its \emph{loss} on a distribution. We denote by $\Delta(\cX\times\cY)$ the simplex of probability distributions over the domain $\cX\times\cY.$  For a given distribution $\cD\in \Delta(\cX\times\cY)$ the risk of a model~$\theta$ for a loss function $\ell$ is given as
\begin{equation}
\risk(\theta, \cD) = \E_{z\sim \cD}\left[\ell(\theta;z)\right]\,.
\label{eq:risk}
\end{equation}
What distinguishes performative prediction from standard supervised learning is that the data-generating distribution~$\cD$ is not fixed and external to the prediction problem. Instead, the data-generating distribution may change depending on the predictive model. To this end, the formalism of performative prediction introduces a so-called \emph{distribution map} as a key conceptual device.

\subsection{Distribution map}

The distribution map expresses the dependence of the data-generating distribution on the predictive model. 
It is a mapping
\[
\cD\colon\Theta\to\Delta(\cX\times\cY)\,.
\]
For every parameter vector $\theta\in\Theta$, the distribution ~$\cD(\theta)\in\Delta(\cX\times\cY)$ describes the data-generating distribution over data instances $z=(x,y)$ that results from deploying the predictive model~$f_\theta$.

The distribution map gives a general way to describe distribution shifts in response to model deployment. The formal setup is abstract in how it does not posit any specific mechanism for the distribution shift. But it makes explicit how performativity enters the statistical formalism of machine learning. 

In practice, a data-generating distribution might change over time for many different reasons. Performative prediction focuses on one and only one mechanism: the effect of model deployment on the data. The simplicity of this formalism is its virtue. In particular, the distribution map is stateless, meaning that deploying the same model at any point in time repeatedly leads to the same distribution. However, researchers have proposed many extensions to the basic formalism that we discuss throughout. 

\paragraph*{Sensitivity.} We typically work with some regularity assumption on the distribution shift to quantify how far the distribution map is from constant. While there are different ways to quantify this distance, we will mostly rely on a definition of sensitivity proposed by \cite{PZMH20}. It amounts to a Lipschitz condition with respect to the $\ell_2$-norm over the parameter space and the Wasserstein distance over the probability simplex.
\begin{definition}[Sensitivity] We say the distribution map $\cD(\cdot)$ is \emph{$\epsilon$-sensitive} if for all $\theta,\theta'\in\Theta$ it holds that
\[\cW(\cD(\theta),\cD(\theta'))\leq \epsilon \|\theta-\theta'\|_2\,,\]
where $\cW$ denotes the Wasserstein-1 distance.
\end{definition}
The special case where $\epsilon=0$ implies $\cD(\theta)=\cD(\theta') $ for all $\theta,\theta'\in\Theta$, recovering a static, non-performative setting.

\subsection{Performative stability}

Given the concept of a distribution map, it is natural to assess a model's risk with respect to the distribution that manifests from its deployment. This leads to two different solution concepts.

The first is a natural equilibrium notion, termed performative stability. Performative stability requires that the model looks optimal on the distribution it entails. More formally, a model $\thetaPS$ is called \emph{performatively stable} if it satisfies the fixed point condition
\begin{equation}
\thetaPS \in \argmin_{\theta\in \Theta} \;\risk(\theta, \cD(\thetaPS))\,.
\label{eq:PS}
\end{equation}
This means, based on data collected after the deployment of $\thetaPS$, there is no incentive to deviate from the model. It is optimal on the static problem defined by $\cD(\thetaPS)$. This offers a simple empirical check for certifying stability. Collect data in current conditions, solve a risk minimization problem on the data, and check if the model is at least as good as the risk minimizer.  In other words, the data we see at a stable point do not refute the optimality of the model. Echo chambers are therefore an apt metaphor for stable points: what we hear doesn't challenge our beliefs. That is not to say that there is no model of smaller loss.

\subsection{Performative optimality}
An alternative solution concept asks for the smallest possible risk \emph{post deployment} globally among all models.
More specifically, we say that a predictive model with parameters~$\thetaPO$ is \emph{performatively optimal} if it satisfies
\begin{equation}
\thetaPO \in \argmin_{\theta\in \Theta} \;\risk(\theta,\cD(\theta))\,.
\label{eq:PO}
\end{equation}
Performative optimality can be thought of as a Stackelberg equilibrium in a sequential game between the decision-maker deploying a model, and the population responding. 
We call the risk of a model on the distribution it entails the \emph{performative risk}, defined as
\begin{equation}\PR(\theta):=\risk(\theta,\cD(\theta))\,.
\label{eq:PR}
\end{equation}
Performatively optimal models minimize performative risk by definition and it always holds that $\PR(\thetaPO)\leq \PR(\thetaPS)$ for any performative optimum~$\thetaPO$ and stable point $\thetaPS.$ In a static setting the two solution concepts of stability and optimality coincide with the classical supervised learning solution.
In general, however, performatively stable points need not be optimal and optimal points need not be stable. In contrast to stability, certifying optimality is harder, as the data available to us, in general, do not tell us anything about the performance of any other model post deployment.~\looseness=-1

\subsection{Rewriting the rules of prediction}

Performative prediction rewrites the rules of prediction insofar as there are now two ways to be good at prediction. 

To see this, fix a model $\phi$ that we imagine provides the current data-generating distribution, and consider deploying another model~$\theta.$ Observe that the model~$\theta$ shows up in two places in the definition of the performative risk: in the first argument, reflecting the dependence of the loss~$\ell(\theta;\,z)$ on $\theta$, and in the second argument representing the dependence of the distribution~$\cD(\theta)$ on $\theta$. Thus, we can decompose the performative risk $\PR(\theta)$ as: 
\begin{align}
\PR(\theta)&=\risk(\theta,\cD(\phi))+ \left(\risk(\theta,\cD(\theta))- \risk(\theta,\cD(\phi))\right)
\label{eq:learning-vs-steering}
\end{align}
This tautology exposes two ways to achieve small performative risk. One is to optimize well in current conditions, that is, to minimize $\risk(\theta,\cD(\phi))$. The other is to \emph{steer} the data to a new distribution~$\cD(\theta)$ that permits smaller risk. The first mechanism is well known. It's what we conventionally call \emph{learning}. Learning is the classical way that, for example, a platform can discover and target consumer preferences. Steering is unique to performative prediction allowing the platform to push consumption towards a distribution more favorable to its objectives. Machine learning had exclusively studied the former, whereas the latter mechanism, as we will see, relates to important questions about the impact of predictive models.

\subsection{Revisiting economic forecasting}
Having outlined the framework, it's worth delineating performative prediction from its $20$th century ancestry in economics. There are at least three important differences. 
\begin{enumerate}
\item  In machine learning predictions come from \emph{parametric models}. The function class is not necessarily fully expressive; we do not presuppose the possibility of perfect private prediction. This implies the distinction between stability and optimality, because stable points are no longer necessarily simultaneously optimal.
\item In machine learning, individuals are described by \emph{features and labels}. Performativity can surface in both. Earlier it was only the outcome. This increases the expressivity of the performative prediction framework, allowing for different sources of performativity.
\item In macroeconomics it's a central forecast about the aggregate economy that has the causal powers to change the course of events. In performative prediction, it's the \emph{individual predictions} output by a predictive model that have the causal powers to change individual behavior and outcomes. 
In aggregate, this causes a change in response to the predictive model.~\looseness=-1
\end{enumerate}
A consequence is that in the GMS theorem the regularity assumptions on the response function directly concern the scalar aggregate outcome variable. In machine learning we work with regularity conditions concerning the population to which the predictive model is applied. Embedding the sociological concept of performativity into the risk minimization formulation of machine learning---and working out the consequences for standard learning practices---is a central ambition of performative prediction.

\section{Retraining under performativity}
\label{sec:retraining}

Using the standard language of machine learning, performativity surfaces in practice as what is commonly called distribution shift. Repeated retraining is the true and tried heuristic for dealing with distribution shift of all sorts.
This section studies the dynamics of retraining in performative prediction, exposing performatively stable points as natural fixed points of retraining. We present a constructive proof for the existence of performatively stable points and provide conditions for the convergence of retraining, together with several natural modifications to the algorithm and practical considerations.

\subsection{Repeated risk minimization}

Consider a conceptual algorithm formalizing the idea of model retraining. In every iteration the algorithm finds the minimizer on the distribution surfacing from the previous deployment. We term this strategy repeated risk minimization (RRM). More formally, for an arbitrary initialization $\theta_0\in\Theta$, RRM defines an iterate sequence as follows:
\begin{equation}\theta_{k+1}:=\argmin_{\theta\in \Theta} \;\risk(\theta,\cD(\theta_k))\,.
\label{eq:RRM}
\end{equation}
We resolve the issue of the minimizer not being unique by setting $\theta_{k+1}$ to an arbitrary point in the argmin set.
It is not hard to see that performatively stable points are fixed points of RRM. A fundamental theorem in performative prediction provides conditions for the existence of stable points and establishes that RRM converges to a stable point at a linear rate.

The result relies on the strong convexity of the loss function with respect to changes in the parameter vector, and a corresponding smoothness condition with respect to changes in the data. More precisely, we say the loss $\ell(\theta;z)$ is \emph{$\beta$-smooth} in $z$ if
\begin{align*}
\|\nabla \ell(\theta;z) - \nabla \ell(\theta;z')\|&\leq\ \beta\|z-z'\|
\end{align*}
 for all $\theta\in\Theta$ and $z,z'\in\mathcal Z$ with $\mathcal Z:=\cup_{\theta\in\Theta}\mathrm{supp}(\cD(\theta)).$

Throughout, we use the notation $\nabla\ell$ to denote the gradient of the loss with respect to the model parameters $\theta$ and $\|\cdot\|$ denotes the $\ell_2$-norm.

\begin{theorem}[\cite{PZMH20}] Suppose that the loss function $\ell(\theta,z)$ is $\gamma$-strongly convex in $\theta$ and $\beta$-smooth in $z$. Then, repeated retraining defined in Equation~\ref{eq:RRM} converges to a unique stable point as long as the sensitivity of the distribution map $\cD(\cdot)$ satisfies $\epsilon< \frac \gamma \beta$. Furthermore, the rate of convergence is linear, and for $k\geq 1$ the iterates satisfy 
\[\|\theta_k-\thetaPS\|\leq \left(\frac{\epsilon \beta} \gamma\right)^k \|\theta_0-\thetaPS\|\,.\]
\label{thm:RRM}
\end{theorem}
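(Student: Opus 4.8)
The plan is to show that the one-step retraining map $G(\phi):=\argmin_{\theta\in\Theta}\risk(\theta,\cD(\phi))$ is a contraction on $\Theta$ with modulus $\epsilon\beta/\gamma$, and then invoke the Banach fixed point theorem. Note first that $G$ is well-defined and single-valued: $\gamma$-strong convexity guarantees a unique minimizer of $\risk(\cdot,\cD(\phi))$ over the closed convex set $\Theta$. Since the fixed points of $G$ coincide exactly with performatively stable points by \eqref{eq:PS}, a unique fixed point of a contraction yields a unique stable point $\thetaPS$. Moreover, the RRM recursion \eqref{eq:RRM} is precisely $\theta_{k+1}=G(\theta_k)$, so the stated linear rate follows by iterating the contraction estimate: $\|\theta_k-\thetaPS\|=\|G(\theta_{k-1})-G(\thetaPS)\|\le(\epsilon\beta/\gamma)\|\theta_{k-1}-\thetaPS\|$, which unrolls to $(\epsilon\beta/\gamma)^k\|\theta_0-\thetaPS\|$. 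The condition $\epsilon<\gamma/\beta$ is exactly what makes the modulus strictly below $1$.

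To establish the contraction, fix $\phi,\phi'\in\Theta$ and set $\theta=G(\phi)$, $\theta'=G(\phi')$. First I would use the first-order optimality conditions for the two constrained problems: since $\Theta$ is convex, optimality of $\theta$ gives $\langle\nabla\risk(\theta,\cD(\phi)),\theta'-\theta\rangle\ge0$, and symmetrically for $\theta'$. Adding the two inequalities, inserting and subtracting $\nabla\risk(\theta,\cD(\phi'))$, applying $\gamma$-strong convexity of $\risk(\cdot,\cD(\phi'))$ to the resulting monotonicity term, and using Cauchy--Schwarz yields
\[
\gamma\|\theta-\theta'\|^2 \le \langle \nabla\risk(\theta,\cD(\phi)) - \nabla\risk(\theta,\cD(\phi')),\, \theta'-\theta\rangle \le \|\nabla\risk(\theta,\cD(\phi)) - \nabla\risk(\theta,\cD(\phi'))\|\,\|\theta-\theta'\|,
\]
so that, dividing by $\|\theta-\theta'\|$ (the case $\theta=\theta'$ being trivial),
\[
\gamma\|\theta-\theta'\| \le \|\nabla\risk(\theta,\cD(\phi)) - \nabla\risk(\theta,\cD(\phi'))\|.
\]

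The crux of the argument --- and the step I expect to be the main obstacle --- is bounding this gradient gap by the distribution shift. Writing the two gradients as $\E_{z\sim\cD(\phi)}[\nabla\ell(\theta;z)]$ and $\E_{z\sim\cD(\phi')}[\nabla\ell(\theta;z)]$, their difference is a difference of expectations of the \emph{fixed} map $z\mapsto\nabla\ell(\theta;z)$ under the two distributions. By $\beta$-smoothness this map is $\beta$-Lipschitz in $z$, so I would invoke Kantorovich--Rubinstein duality for the Wasserstein-1 distance to bound the difference of expectations by $\beta\,\cW(\cD(\phi),\cD(\phi'))$. The subtlety is that $\nabla\ell$ is vector-valued; I would handle this by considering the scalar function $z\mapsto\langle u,\nabla\ell(\theta;z)\rangle$ for an arbitrary unit vector $u$, which is $\beta$-Lipschitz, applying the scalar duality, and taking the supremum over $u$ to recover the $\ell_2$-norm of the gradient difference. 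Finally, $\epsilon$-sensitivity of the distribution map gives $\cW(\cD(\phi),\cD(\phi'))\le\epsilon\|\phi-\phi'\|$. Chaining these bounds produces $\gamma\|\theta-\theta'\|\le\epsilon\beta\|\phi-\phi'\|$, i.e.\ $\|G(\phi)-G(\phi')\|\le(\epsilon\beta/\gamma)\|\phi-\phi'\|$, completing the contraction and hence the theorem.
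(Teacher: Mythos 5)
Your proposal is correct and follows essentially the same route as the paper's proof: show that the retraining map $G$ is an $(\epsilon\beta/\gamma)$-contraction by combining strong convexity with Kantorovich--Rubinstein duality and $\epsilon$-sensitivity, then invoke Banach's fixed point theorem and unroll the contraction estimate for the linear rate. The one difference is a point in your favor: where the paper instantiates its strong-convexity bound at $\theta'=G(\phi)$ and asserts $\E_{z\sim\cD(\phi)}\nabla\ell(G(\phi);z)=0$ (which tacitly requires the minimizer to lie in the interior of $\Theta$), you add the first-order variational inequalities at both minimizers, which handles minimizers on the boundary of the constraint set correctly.
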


\begin{proof}
The key step in the proof is to show that the mapping 
$G(\theta):=\argmin_{\theta'\in\Theta} \risk(\theta',\cD(\theta))$ satisfies the following inequality
\begin{equation}
\|G(\theta)-G(\phi)\|\leq \frac{\epsilon\beta}{\gamma}\|\theta-\phi\|\,.
\label{eq:contraction}
\end{equation}
Then, for $\epsilon< \frac \gamma \beta$ it follows that $G$ is a contractive map. This implies the existence of a unique stable point by Banach's fixed point theorem. Furthermore, the linear convergence rate follows by replacing $\phi$ with $\thetaPS$ and applying the bound recursively.

It remains to derive Inequality~\ref{eq:contraction} from the assumptions of the theorem. Therefore, consider the static optimization problem induced by the deployment of a model $\theta$ with minimizer at $G(\theta)$. From strong convexity it follows that for any  $\theta,\theta'\in\Theta$
\[\E_{z\sim\cD(\theta)} \nabla \ell(\theta';z)^\top (\theta'-G(\theta)) \geq\gamma \|\theta'-G(\theta)\|^2\,.\]
To obtain a corresponding upper bound on the left-hand-side we note that $\nabla \ell(\theta';z)^\top v$ is a $\beta\|v\|$-Lipschitz function in $z$ for any $v$ and we can apply the Kantorovich-Rubinstein duality to relate the expected function value across $\cD(\theta)$ and $\cD(\phi)$.

More, formally, for any $v$ and $\theta, \theta'\in\Theta$ it holds that 
\begin{align}
& v^\top\Big(\E_{z\sim\cD(\theta)} \nabla \ell(\theta';z)-\E_{z\sim\cD(\phi)} \nabla \ell(\theta';z)\Big) \notag \\
& \leq \beta \|v\| \cdot \cW(\cD(\theta),\cD(\phi)) \label{eq:shiftbound}\\
&\leq \epsilon\beta\|v\|\cdot \|\theta-\phi\|\,. \notag
\end{align}
Finally, we instantiate the first bound with $\theta'=G(\phi)$, such that $\mathbb E_{z\sim\cD(\phi)} \nabla \ell(\theta';z)=0$ and choose $v = G(\phi)-G(\theta)$. Then, comparing the two bounds completes the proof.
\end{proof}




\citet{PZMH20} showed that with any of the three assumptions removed RRM is no longer guaranteed to converge. In particular, convexity is not sufficient for finding stable points. The reason is that the per-step reduction in risk achieved on any fixed distribution needs to overcome the effect of the distribution shift caused by deploying the model update. The latter can potentially induce an error that is quadratic in the magnitude of the parameter change. 

The result in Theorem~\ref{thm:RRM} can be seen as an algorithmic analog to the GMS Theorem in the context of risk minimization. It gives a condition for when a model remains optimal even under the distribution shift that surfaces from its deployment. Notably this result does not require that that problem is realizable, nor that perfect private prediction is possible. Instead, it focuses on local optimality with respect to a risk function alone.

\paragraph*{Stability and retraining in more general settings.}
The concept of performative stability as a fixed point of retraining has been generalized to several extensions of the performative prediction framework.  
\cite{wang2023constrained} study equilibrium points in the presence of model-dependent optimization constraints. 
\cite{li22multiplayer} considered a collaborative learning setting where multiple decision makers optimize simultaneously while each agent only observes and influences parts of the distribution. Here the existence of stable points depends on the average sensitivity across the local distributions. \cite{narang23multiplayer} proposed a game-theoretic setting where the population data reacts to competing decision makers’ actions. They show that under appropriate conditions on the loss function simultaneous retraining converges to stable points in this multi-player setting.  Similarly, in \cite{piliouras23multiAgent} the learning problems of multiple agents are coupled through the data distribution that depends on the actions of all agents. 
\cite{jin24federated} study convergence to performative stability in heterogeneous federated learning settings. \cite{wang23ppnetwork} extend the model to account for network effects and the locality of mutual influence. 
\cite{brown22stateful} formulated a time-dependent performative prediction problem where the decision-maker seeks to optimize the reward under the fixed point distribution induced by the deployment of their model. They prove convergence of retraining to stable points under a generalized notion of sensitivity. 
\cite{gois2024performative} account for changes in performative effects due to inaccurate predictions and a resulting loss in trust.
\cite{mandal2023performative} study performativity in reinforcement learning where the transition probabilities of the Markov decision process change with the deployed policy. Proposing a corresponding sensitivity assumption they prove convergence of repeated retraining to stable policies. 
Beyond supervised learning, \cite{bertrand2023stability} extend the concept of performative stability to generative models trained on their own data. They prove convergence of retraining as long as the amount of generated data from previous models entering future training data is small enough. 
What unifies all these results is the interpretation of model retraining as a natural equilibrium dynamic under performativity that converges under appropriate regularity assumptions on the performative shift.

\subsection{Gradient-based optimization}
Let us replace the optimization oracle in RRM with a gradient step. This defines the following repeated gradient descent (RGD) procedure
\[\theta_{k+1} := \theta_k-\eta \E_{z\sim\cD(\theta_k)}\nabla \ell(\theta_k;z)\,.\]
To study the convergence of RGD we make the additional assumption that the loss function is smooth with respect to changes in the model parameters. Then, from classical results in convex optimization we know that in the static setting, repeated gradient descent, with appropriately chosen step size, makes progress towards the minimizer in each step, converging as $k\rightarrow \infty$. What differentiates the performative prediction setting is that the progress in each step is made on a moving sequence of distributions determined by the trajectory of the algorithm. By carefully choosing the step size to control for the shift induced by the update, it can be shown that in the regime $\epsilon<\frac \gamma \beta$ RGD can achieve linear convergence to stability, similar to RRM, although at a slower rate.

The RGD algorithm was first studied by \cite{PZMH20}; \cite{mendler20stochasticPP} refined the analysis. We refer to the latter reference for a formal proof and an exact statement of the rate. Here we present an illustrative argument by \cite{drusvyatskiy23stochastic} for why we can expect RGD to converge despite distribution shift along the trajectory. To this end, we show that RGD can be seen as solving the equilibrium problem $\risk(\theta,\cD(\thetaPS))$ using biased gradients.
We use the shorthand notation \[ g_\mathrm{PS}(\theta)\,:=\,\E_{z\sim\cD(\thetaPS)}\nabla \ell(\theta;z)\] to denote the gradient at $\theta$ evaluated on the equilibrium distribution $\cD(\thetaPS)$. Similarly, we write $g_{k}(\theta)$ to denote the gradient on the problem induced at step~$k$ by $\cD(\theta_k)$.  We inspect how $g_k(\theta)$ relates to $g_{\mathrm{PS}}(\theta)$.
To do so, we recall the contraction argument from the previous section. Using Inequality~\ref{eq:shiftbound} with $v$ being the unit vector, we can bound the deviation of $g_k$ from $g_{\mathrm{PS}}$  as
\begin{align}\|g_k(\theta_k)-g_{\mathrm{PS}}(\theta_k)\|&\leq \epsilon\beta \|\theta_k-\thetaPS\|\,.
\label{eq:bias}
\end{align}

To interpret this bound, note that $\gamma$-strong convexity of the loss allows us to relate the parameter distance back to the gradient norm as $\|\theta_k-\thetaPS\|\leq\frac 1 \gamma  \|g_{\mathrm{PS}}(\theta_k)\|$. Combined with a geometric argument this implies
\[\cos\big(\angle (g_k(\theta_k), g_{\mathrm{PS}}(\theta_k))\big)\leq \sqrt{1-\left(\frac {\epsilon\beta}{\gamma}\right)^2}\,.\]

Hence, in the regime $\epsilon< \frac \gamma \beta$ the gradients computed on~$\cD(\theta_k)$ and~$\cD(\thetaPS)$ are aligned and span an angle strictly smaller than $90^{\circ}$. Thus, the bias caused by performativity along the trajectory is never making update steps point against the gradient flow on the equilibrium problem, achieving convergence eventually.  

For quantifying the rate of convergence it is important to note that according to Equation~\ref{eq:bias} the bias diminishes as the stable point is approached, and it does so linearly in parameter distance. This is the reason why the linear rate of gradient descent for smooth and strongly convex functions can be preserved under performativity, given the appropriate regularity condition on the shift. We refer to~\cite{mendler20stochasticPP} for the exact statement of the rate. Characterizing similar conditions for the convergence of retraining under weaker notions of strong convexity \cite[e.g.,][]{polyak1963GradientMF,necoara2015LinearCO} offers interesting open questions.

\subsection{Stochastic Optimization}

The algorithms we discussed so far have been largely conceptual since they assume knowledge of the data distribution for every update step. In this section we focus on retraining algorithms that can access the distribution only through samples. This adds another technical step to the convergence analysis since the stochastic variance propagates into the distribution.

\paragraph*{Stochastic gradient descent.}
We start by considering stochastic gradient descent (SGD) in performative settings. In each step a single sample is drawn from the distribution induced by the most recently deployed model. The sample is then used to update the model before performing the next deployment. Formally, we can write the update step as
\begin{equation}
\theta_{k+1}\,:=\,\theta_k-\eta_k\nabla \ell(\theta_k;z)\qquad\text{with } z\sim\cD(\theta_k)\,,
\end{equation}
where $\eta_k>0$ denotes the stepsize chosen at step $k$.

The central difference to stochastic optimization in a static setting is that the distribution from which samples are drawn depends on the trajectory of the learning algorithm. Thus, not only the deviation from the gradient due to stochasticity needs to be controlled, but also the effect it has on the induced distribution. \cite{mendler20stochasticPP} showed that the sublinear convergence of SGD can be preserved in the regime $\epsilon<\frac \gamma\beta$ with a decreasing stepsize schedule that accounts for the variance of the gradients and the size of the shift it induces.

As in the static setting, the result relies on a bounded variance assumption on the stochastic updates that is assumed to be preserved under distribution shift.
\cite{mendler20stochasticPP} use the following $(\sigma,L)$-boundedness assumption which is assumed to hold for all $\theta,\theta'\in\Theta$
\begin{equation}
\E_{z\sim\cD(\theta)}\|\nabla \ell(\theta';z)\|^2\leq \sigma^2 + L^2\cdot \|\theta'-G(\theta)\|^2\,.
\label{eq:sig}
\end{equation}
Given the additional assumption that the loss function is $\beta$-smooth in the model parameters, then, for $L=\beta$, the above bound is implied by the more classical $\sigma^2$ bounded variance assumption on the expected deviation of the stochastic gradients from the mean $\mathbb{E}_{z\sim\cD(\theta)} \nabla \ell (\theta';z)$. For $L=0$ it recovers the stronger textbook assumption of bounded gradient norm.

Based on this assumption Theorem~\ref{thm:SGD} proves that even  retraining methods that have access to only a single sample can converge to stability in the regime $\epsilon<\frac\gamma \beta$.
\begin{theorem}[\cite{mendler20stochasticPP}]
Suppose that the loss $\ell(\theta,z)$ is $\gamma$-strongly convex, $\beta$-smooth in $z$ and in $\theta$, and the variance of the gradient is $(\sigma,L)$-bounded. Assume the distribution map is $\epsilon$-sensitive with  $\epsilon<\frac\gamma\beta$. Then, for a stepsize sequence chosen as $\eta_k= \left((\gamma-\epsilon\beta)k+8L^2/(\gamma-\epsilon\beta)\right)^{-1} $ the iterates satisfy \[\E {\|\theta_{k+1}-\thetaPS\|^2}\leq \frac {M}{(\gamma-\epsilon \beta)^2 k + 8 \beta^2}\,,\]
where  $M:=\max\big(2\sigma^2,8L^2 \|\theta_1-\thetaPS\|^2\big)$.
\label{thm:SGD}
\end{theorem}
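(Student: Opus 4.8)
The plan is to derive a one-step recursion for the squared iterate error $a_k := \E\|\theta_k-\thetaPS\|^2$ and then solve it by induction using the prescribed step-size schedule. First I would expand the update and take the expectation over the fresh sample $z\sim\cD(\theta_k)$ conditioned on $\theta_k$, giving
\[
\E\big[\|\theta_{k+1}-\thetaPS\|^2\mid\theta_k\big] = \|\theta_k-\thetaPS\|^2 - 2\eta_k\,g_k(\theta_k)^\top(\theta_k-\thetaPS) + \eta_k^2\,\E_{z\sim\cD(\theta_k)}\|\nabla\ell(\theta_k;z)\|^2,
\]
where $g_k(\theta_k)=\E_{z\sim\cD(\theta_k)}\nabla\ell(\theta_k;z)$. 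What remains is to lower-bound the cross term to extract contraction and upper-bound the second moment to control variance, both in terms of $\|\theta_k-\thetaPS\|^2$.

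The crucial---and genuinely performative---step is the lower bound on the cross term, since $g_k$ is evaluated on the off-equilibrium, trajectory-dependent distribution $\cD(\theta_k)$ rather than on $\cD(\thetaPS)$. Here I would combine three facts: stationarity at the stable point, $g_{\mathrm{PS}}(\thetaPS)=0$; the $\gamma$-strong convexity of $\risk(\cdot,\cD(\thetaPS))$, which via strong monotonicity of its gradient gives $g_{\mathrm{PS}}(\theta_k)^\top(\theta_k-\thetaPS)\geq\gamma\|\theta_k-\thetaPS\|^2$; and the bias bound of Inequality~\ref{eq:bias}, $\|g_k(\theta_k)-g_{\mathrm{PS}}(\theta_k)\|\leq\epsilon\beta\|\theta_k-\thetaPS\|$. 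Writing $g_k=g_{\mathrm{PS}}+(g_k-g_{\mathrm{PS}})$ and applying Cauchy--Schwarz to the second piece yields $g_k(\theta_k)^\top(\theta_k-\thetaPS)\geq(\gamma-\epsilon\beta)\|\theta_k-\thetaPS\|^2$. I expect this to be the main obstacle: it is exactly the inequality that forces the regime $\epsilon<\tfrac\gamma\beta$, under which $\tilde\gamma:=\gamma-\epsilon\beta>0$ and each step makes contractive progress in expectation.

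For the second moment I would invoke the $(\sigma,L)$-boundedness assumption of Equation~\ref{eq:sig}, which bounds the term by $\sigma^2+L^2\|\theta_k-G(\theta_k)\|^2$. Because the variance is measured relative to $G(\theta_k)$ rather than to $\thetaPS$, I would translate it using the contraction of $G$ established in Theorem~\ref{thm:RRM}: since $G(\thetaPS)=\thetaPS$, the triangle inequality and $\|G(\theta_k)-\thetaPS\|\leq\tfrac{\epsilon\beta}\gamma\|\theta_k-\thetaPS\|$ give $\|\theta_k-G(\theta_k)\|\leq(1+\tfrac{\epsilon\beta}\gamma)\|\theta_k-\thetaPS\|$, hence $\|\theta_k-G(\theta_k)\|^2\leq 4\|\theta_k-\thetaPS\|^2$ in the regime $\tfrac{\epsilon\beta}\gamma<1$. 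Substituting both bounds and taking full expectation produces the recursion
\[
a_{k+1}\ \leq\ \bigl(1-2\eta_k\tilde\gamma+4\eta_k^2 L^2\bigr)\,a_k\ +\ \eta_k^2\sigma^2.
\]

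Finally, I would close with the standard Robbins--Monro induction for strongly convex stochastic approximation. Plugging in $\eta_k=\bigl(\tilde\gamma k+8L^2/\tilde\gamma\bigr)^{-1}$ and taking the claimed bound $a_k\leq M/(\tilde\gamma^2(k-1)+8\beta^2)$ as the inductive hypothesis, I would verify that the factor $1-2\eta_k\tilde\gamma+4\eta_k^2L^2$ shrinks $a_k$ just enough to absorb the additive $\eta_k^2\sigma^2$ term and re-establish the bound at index $k+1$, while the base case follows from the definition of $M$. This last step is routine algebra once the recursion is in hand: the step size is engineered so that the leading $1/k$ behavior matches the stated rate, with the offset $8L^2/\tilde\gamma$ present precisely to tame the $4\eta_k^2L^2$ contribution at small $k$.
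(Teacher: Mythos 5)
Your proposal is correct and takes essentially the same route as the paper's proof sketch: you view SGD as a biased stochastic-approximation method for the equilibrium problem $\risk(\cdot,\cD(\thetaPS))$, extract the $(\gamma-\epsilon\beta)$ contraction from the bias bound in Inequality~\ref{eq:bias} together with strong monotonicity at $\thetaPS$, control the second moment via $(\sigma,L)$-boundedness combined with the contraction of $G$ and $G(\thetaPS)=\thetaPS$, and close with the stepsize-tuned induction, exactly as the paper does. The only cosmetic difference is that you relax the variance factor $\left(1+\tfrac{\epsilon\beta}{\gamma}\right)^2$ to $4$, which merely loosens a constant and still lets the induction go through.
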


\begin{proof}[Proof Sketch]
We provide a modified version of the original proof and analyzes SGD as a method for solving the static risk minimization problem at equilibrium, building on the intuition of~\cite{drusvyatskiy23stochastic}. Since in every step a sample $z$ is drawn iid from $\cD(\theta_k)$ the SGD update step is unbiased with respect to $g_{k}(\theta_k)$. From the previous section we know that we can control the bias of $g_{k}$ with respect to $g_{\mathrm{PS}}$, and repeated gradient descent converges to the performative stability. Thus, it remains to show that the variance of the stochastic gradients decreases sufficiently quickly as we approach $\thetaPS$.

To this end, we use the $(\sigma,L)$-boundedness assumption together with the fact that $G$ is a contractive map with Lipschitz constant $\frac{\epsilon \beta} \gamma$ to get
\begin{align*}
\E_{z\sim\cD(\theta_k)}\|\nabla \ell(\theta_k;z)\|^2
&\leq \sigma^2 + L^2 \left(1+\frac {\epsilon\beta} \gamma\right)^2 \Delta_\mathrm{PS}^2(\theta_k)\,,
\end{align*}
where $\Delta_\mathrm{PS}^2(\theta_k):=\mathbb E \|\theta_{k}-\thetaPS\|^2$. 
Combining this bound with the analysis of repeated gradient descent yields the key recursion:
\begin{align}
\E {\|\theta_{k+1}-\thetaPS\|^2} &\leq \eta_k^2 \sigma^2 +  \delta_k\E {\|\theta_k-\thetaPS\|^2} \label{eq:recursion}
\end{align}
with $\delta_k = 1-2(\gamma-\epsilon\beta)\eta_k 
+\eta_k^2L^2\Big(1+\tfrac {\epsilon\beta} \gamma\Big)^2\,.$

The decreasing stepsize schedule proposed in Theorem~\ref{thm:SGD} defines a trade-off between the progress on the static problem and the bias induced with respect to the equilibrium problem.
Given the recursion, a simple inductive argument suffices to show the claimed bound. See \cite{mendler20stochasticPP} for technical details.
\end{proof}

It is illustrative to compare the bound in Theorem~\ref{thm:SGD} to classical results on SGD (see, e.g., \cite{bottou18opt}).
Therefore, let us focus on the key recursion in Equation~\ref{eq:recursion}. The recursion recovers classical results for SGD in the static case by setting $\epsilon=0$. 
Furthermore, for the special case of $L=0$ the recursion recovers those of classical SGD analyses in the static setting where the typical strong convexity parameter $\gamma$ is replaced by $\gamma-\epsilon\beta$. From this analogy it directly follows that sublinear convergence at the rate $O(\frac 1 k)$ for decreasing stepsize $\eta_k=\frac \eta k$ can be achieved as long as $\eta<(2(\gamma-\epsilon\beta))^{-1}$. The result in Theorem~\ref{thm:SGD} provides a stronger guarantee by more carefully trading-off between the progress on the static problem and the bias induced with respect to the equilibrium problem.

\paragraph*{Beyond SGD.}
The idea that stochastic algorithms in the presence of  performative distribution shift are implicitly solving the static problem $F(\theta):=\risk(\theta;\cD(\thetaPS))$ corrupted by a vanishing bias can be generalized beyond SGD. 
\cite{drusvyatskiy23stochastic} applied this principle to various popular online learning algorithm to translate their rate of convergence from the static setting to the dynamic setting. This includes proximal point methods, inexact optimization methods, accelerated variants, and clipped gradient methods. We refer to their paper for the formal statements. Providing a different perspective, \cite{cutler2023stochastic} study retraining through the lens of variational inequalities and prove asymptotic minimax optimality of projected gradient descent in the performative setting.\looseness=-1


\subsection{Practical considerations} 
So far we assumed that the updated model is deployed after every stochastic update, immediately causing a shift in the data distribution. We refer to this strategy as \emph{greedy deploy}.
However, in practice, model deployments may come with high costs. Thus, it might be beneficial to collect additional samples from any given distribution and further refine the model update before deployment. Adapting the terminology from \cite{grunberg54}, we call updates that are deployed \emph{public model updates}, and updates that are done offline \emph{private model updates}. This distinction between public and private updates adds a new dimension to stochastic learning, not typically found in static settings. With every sample that arrives the learner has to decide whether to deploy the updated model and trigger a distribution shift or continuing to collect more sample from the current distribution.

\begin{figure}[t!]
    \centering
    \includegraphics[width = 0.8\columnwidth]{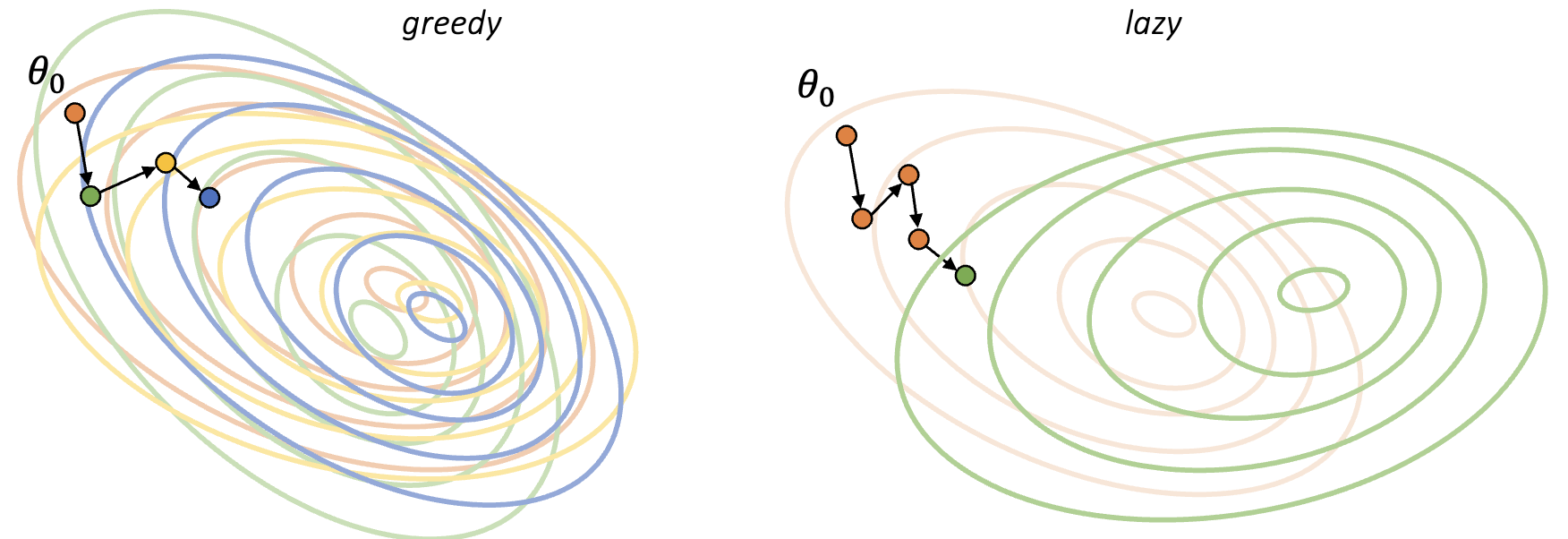}
    \caption{Different deployment strategies in stochastic optimization for performative prediction. Left: Greedy deploy publishes the model after every stochastic update. Right: Lazy deploy processes multiple samples offline before releasing the updated model. }
    \label{fig:SGD}
\end{figure}

To illustrate this trade-off, let us consider a natural variant of SGD that performs $n(k)$ stochastic update steps in between deployment $k$ and $k+1$ based on repeated sampling from $\cD(\theta_k)$. We call such a strategy \emph{lazy deploy}.
\cite{mendler20stochasticPP} studied the convergence properties of lazy deploy for the case where 
\[
n(k)\propto k^\alpha\qquad\text{for }\alpha >0\,.
\] 
In their algorithm, the optimization problem in each step $k\geq 1$ is treated as an independent, static optimization problem and SGD with a decaying stepsize schedule is used to solve it.  Thus, conceptually, the lazy deploy algorithm can be viewed as an approximation to the RRM procedure, where the approximation error decreases as $k$ grows.

Bounding the approximation error and analyzing the corresponding effect on the distribution shift, \cite{mendler20stochasticPP} gave a convergence guarantee for lazy deploy under the same conditions as greedy deploy in Theorem~\ref{thm:SGD}. Their result shows that lazy deploy can reduce the number of deployments at a cost of increased sample complexity. More formally, to achieve a suboptimality $\mathrm E [\|\theta_k-\thetaPS\|^2]\leq \delta$ the number of deployments scale as 
\[\mathrm{greedy:}\; O\big(1/\delta\big)\rightarrow \mathrm{lazy}:\; O\big(1/\delta^\alpha\big)\]
and the corresponding sample complexity scales as 
\[\mathrm{greedy:} \;O\big(1/\delta\big)\rightarrow \mathrm{lazy}:\;  O\big(1/\delta^{\frac{1+\alpha}\alpha}\big)\,.\]
Comparing the asymptotic properties of greedy and lazy deploy illustrates that depending on the cost of sample collection and the cost of model deployment, either of the two strategies can be more desirable. \cite{mendler20stochasticPP} further found that greedy deploy is particularly effective for weak performative shifts as it behaves like SGD in the static setting, whereas lazy deploy converges faster for larger $\epsilon$ by closely mimicking the behavior of RRM.

Beyond reducing for overheads of model deployments when searching for performatively stable points, efficiently reusing data from previous deployments to accelerate convergence was shown to be a promising avenue~\cite{khorsandi2024tight}. Further, more computationally efficient approaches to adjust complex models to the gradually shifting distribution could be interesting to explore in the context of performative prediction. These include practical approaches to transfer learning and domain adaptation in deep learning~\citep{long15transferCV, howard18universal} as well as recently popularized parameter-efficient fine-tuning techniques~\citep{houlsby19finetuning}. 

\subsection{Data feedback loops and dynamic benchmarks}

An interesting special case of performativity has been studied under the umbrella of \emph{data feedback loops}. A data feedback loop arises when models generate outputs that make it into future training datasets, which then in turn influence future models. From our discussion, we readily recognize the problem as a variant of the retraining dynamic. There are several different stories in the literature about what data feedback loops might lead to. Some worry about "echo chambers" or "filter bubbles" in settings like digital platforms and social networks \citep{jiang2019degenerate}. Others have pointed out concerns relating to algorithmic fairness \citep{ensign2018runaway, liu2018delayed}. In a simple model of bias amplification, \cite{taori2023data} identify consistent calibration as a sufficient condition for stability under such data feedback loops.  In the context of large language models, a particular worry is the hypothetical scenario of \emph{model collapse} \citep{shumailov2024ai, gerstgrasser2024model} in which a model fails to capture salient parts of the original distribution.

\emph{Dynamic benchmarks}, on the other hand, try to make a virtue out of data feedback loops. A dynamic benchmark evolves a test set over time by adding failure cases of current models to the test set \citep{dynabench}. The hope is that dynamic benchmarks enable steady progress as models continue to improve by learning from their own mistakes \citep{shirali2023theory}.

Performativity provides an overarching conceptual framework that informs both data feedback loops and dynamic benchmarks.

\section{Optimizing the performative risk}
\label{sec:po}
\label{sec:optimizing-performative-risk}

Beyond ad hoc risk minimization, finding models of small risk post deployment is an inherently different problem. Finding models that have small performative risk, necessitates to take the impact of models on the distribution into account.
From an optimization point of view, the main challenge of incorporating this additional dimension is that the induced distribution $\cD(\theta)$ can only be observed after the model $\theta$ is deployed.

In this section we discuss algorithmic approaches for finding performative optima that deal with this uncertainty by relying on sensitivity as a regularity condition on the distribution shift. We do not yet make any problem-specific assumption on the structure of $\cD(\theta)$.

\subsection{Derivative-free optimization}

The first approach we discuss applies gradient-based optimization to the performative risk directly. As gradients  are infeasible to compute without knowledge of the distribution map, we resort to derivative-free methods. Such methods explore modifications of the current iterate to investigate how they impact the performative risk. Such zero-order local information is then used to determine directions of improvement from finite differences, to optimize the performative risk. Variations of this approach were explored by~\cite{izzo21gd, miller2021outside, izzo22gradualPGD, Ray2022DecisionDependentRM} and \cite{cyffers24}. 

Extending this approach to a stateful setting, \cite{Ray2022DecisionDependentRM} study performative optimization in geometrically decaying environments assuming zeroth order or  first order oracle access to the performative risk. Adding optimization constraints to the problem, \cite{yan23POconstraint} take a primal-dual perspective and study performative optimization with approximate gradients.

While natural, a strong requirement for gradient-based approaches to efficiently minimize the performative risk from local information is that $\PR(\theta)$ satisfy convexity. Investigating this assumption \cite{miller2021outside} provided sufficient conditions on the structure of the distribution map for which (strong) convexity of the static problem is preserved under performativity. In particular, they showed that if the loss is $\gamma$-strongly convex and the distribution satisfies the following stochastic dominance condition 
\[\mathop{\E}_{z\sim\cD(\alpha \theta_1 + (1-\alpha)\theta_2)} \ell(\theta;z)\leq \mathop{\E}_{z\sim\alpha \cD( \theta_1) + (1-\alpha)\cD(\theta_2)} \ell(\theta;z),\]
for any $\theta,\theta_1,\theta_2\in\Theta$ and $\alpha\in(0,1)$, then $\PR(\theta)$ is $\gamma'=\beta-2\epsilon\gamma$ strongly convex. The condition corresponds to convexity of the risk in the distribution argument. Similar conditions have been extensively studied within the literature on stochastic orders, see~\cite{shaked2007stochastic}. A stronger result holds for certain families of distributions, such as location-scale families.

In general, however, the performative risk might not satisfy any structural properties which would imply that its stationary points have low performative risk. 
In fact, the performative risk can be non-convex, even for strongly convex losses and simple distribution shifts, as noted in~\cite{PZMH20}. Thus, controlling the loss alone is not sufficient to induce a \emph{nice} learning problem.  
Hence, relying on zero-order gradient methods can help achieve local improvements, but it may lead to suboptimal solutions in many natural cases.

\subsection{Bandit optimization}
\label{sec:bandit}

When $\PR(\theta)$ is non-convex, global exploration is necessary for finding performative optima. To this end, \cite{jagadeesan22regret} explored how tools from multi-armed bandits can be applied to guide exploration in performative prediction and find models of low performative risk. 

To set up the objective formally, assume at every step a model~$\theta_t$ is deployed, and after deployment the distribution $\cD(\theta_t)$ can be observed. Given the online nature of this task,  we measure the quality of a sequence $\theta_0,\theta_1,...,\theta_T$ by evaluating the performative regret where  performative optimality represents the target:
\[
\mathrm{Reg}(T):= \sum_{t\leq T} \PR(\theta_t)-\PR(\thetaPO).
\]

A unique characteristic of the performative prediction problem is the feedback structure. After each model deployment we get to observe the induced distribution $\cD(\theta_t)$, rather than only bandit feedback about the reward $\PR(\theta_t)$. We call this performative feedback. Together with knowledge of the loss function  this additional information can inform the reward of unexplored arms and allows for a tighter construction of confidence bounds. In the following we provide the main intuition for why this leads to an algorithm with a regret bound that primarily scales with the complexity of the distribution shift, rather than the complexity of the reward function.

\paragraph*{Performative confidence bounds.}  Access to the distribution $\cD(\theta_t)$ allows the learner to evaluate  $\risk(\theta';\cD(\theta_t))$ for any $\theta'\in\Theta$. Hence, for constructing confidence bounds on $\PR(\theta')$ we can extrapolate from $\risk(\theta';\cD(\theta_t))$, rather than extrapolating from $\PR(\theta_t)$.  
As a result, it remains to deal with uncertainty due to distribution shift alone, rather than the uncertainty in the reward function. Assuming $L_z$-Lipschitzness of the loss in $z$ and $\epsilon$-sensitivity of the distribution map we get the following upper-confidence bound on an unexplored model $\theta'\in\Theta$
\begin{align}
\PR(\theta')&\leq \min_t \;\risk(\theta';\cD(\theta_t))+L_z\epsilon\|\theta'-\theta_t\|\,.
\label{eq:ub}
\end{align}
The bound follows from the Kantorovich-Rubinstein duality applied to the Lipschitz loss function and then invoking the definition of sensitivity. We refer to Figure~\ref{fig:PR} for contrasting the performative confidence bounds with those of classical applications of Lipschitz bandits that follow from extrapolating $\PR(\theta)$ under a Lipschitzness assumption of the reward function.

\begin{figure}
    \centering
    \includegraphics[width=0.95\columnwidth]{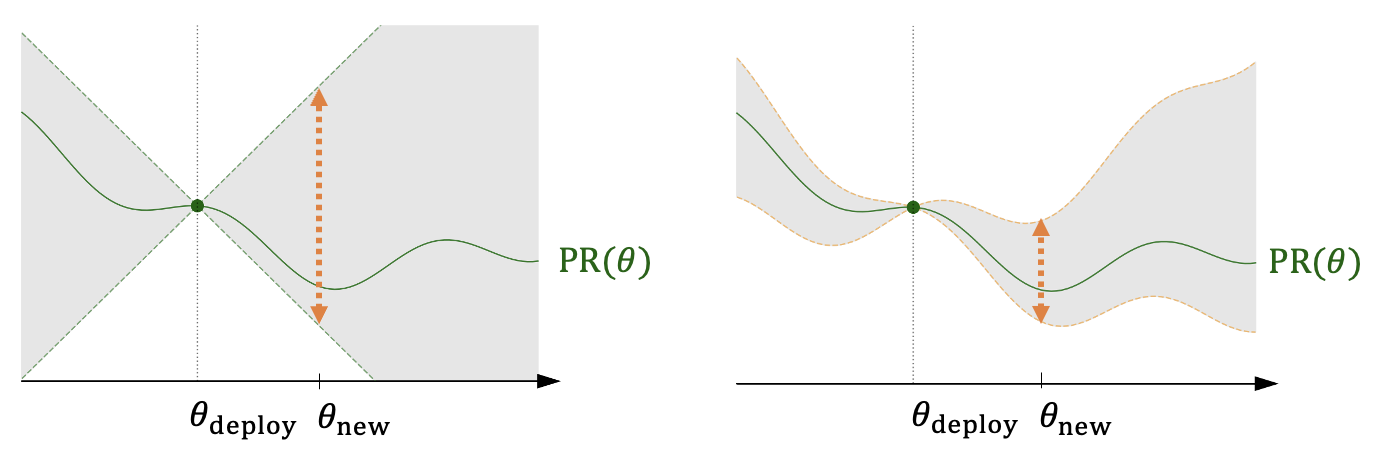}
    \caption{Confidence bounds on the performative risk. Left: using bandit feedback and Lipschitzness of the performative risk. Right: using performative feedback together with sensitivity of the distribution map and Lipschitzness of $\ell$ in $z$.}
    \label{fig:PR}
\end{figure}

From the figure it can been seen that performative confidence bounds potentially allow the learner to discard high risk regions of the parameter space without ever exploring them. 
Furthermore, from the bound in Equation~\ref{eq:ub} we see that the size of the confidence sets is independent of the dependence of the loss function on $\theta.$ This stands in contrast with a naive application of black box optimization techniques to $\PR(\theta)$. Furthermore, the extrapolation uncertainty diminishes as~$\epsilon\rightarrow 0$, recovering a static problem.

Building on the above construction, taking into account finite sample uncertainty and applying techniques from successive elimination~\citep{evendar02successiveEl} to determine which model to deploy next, \cite{jagadeesan22regret} proposed an algorithm that achieves performative regret of the form
\begin{equation}\mathrm{Reg}(T ) = \tilde O\left(\sqrt{T} +T^{\frac{d+1}{d+2}} (L_z\epsilon)^{\frac{d}{d+2}}\right)
\label{eq:regret}
\end{equation}
where $d$ denotes the zooming dimension of the problem; an instance-dependent notion of dimensionality introduced by \cite{kleinberg08bandit} that can be upper-bounded by the dimension of the parameter space for the purpose of this exposition.

In contrast to classical applications of Lipschitz bandits, the bound in Equation~\ref{eq:regret} scales as $\tilde O(\sqrt T)$ in the case of $\epsilon=0$ and only finite sample uncertainty remains. 

This result explains interesting particularities of the performative prediction problem for the application of the rich toolkit from  bandit optimization. An interesting future direction would be to borrow techniques from non-stationary bandits~\citep{bandit88} for dealing with exogenous distribution shifts co-occurring with performativity, or to apply ideas from contextual bandits~\citep{balseiro2018contextual} to deal with modifications to the learning objective. Another promising direction would be to investigate applications of constraint exploration as in~\cite{wu16conservativebandit, turchetta19safe}, to account for consequences of model deployment and safeguard against negative impact. See also \cite{grac15safe} for a starting point on related work in safe reinforcement learning.~\looseness=-1

\subsection{Model evaluation, algorithmic fairness and reward hacking}

Beyond predictive accuracy alone, we often also care about the specifics of how predictions affect a population. This is a concern central to the active research in the area of fairness in machine learning, see~\cite{barocas-hardt-narayanan}. 

Researchers have proposed numerous statistical \emph{fairness criteria} that formalize different notions of equality between different groups in the population, defined along the lines of categories such as \emph{race} or \emph{gender}. One criterion, for example, asks to equalize the true positive rate of the predictor in different demographic groups~\citep{cole1973bias, hardt2016equality}.
Such fairness criteria are typically thought of as constraints on a static supervised learning objective. Departing from this perspective,  \cite{liu2018delayed} proposed a notion of \emph{delayed impact} that models how decisions affect a population in the long run.  As an example, think of how lending practices based on a credit score affect welfare in the population. A default on a loan not only diminishes profit for the lender, it also impoverishes the individual who defaulted. The latter effect is typically not modeled in supervised learning. The results show that standard fairness criteria, when enforced as constraints on optimization, may not have a positive long-term impact on a disadvantaged group, relative to unconstrained optimization.

Delayed impact is an instance of performative prediction. It assesses the impact and the performance of a model \emph{after} deployment.  Moreover, it illustrates the importance of performativity as a criterion in the evaluation of machine learning. Adding further support to this point, \cite{fuster2022predictably} showed how the introduction of machine learning in lending can have a disparate effect on borrowers of different racial groups.  In a similar vein, \cite{hu2018short} studied the same problem in labor markets. \cite{ensign2018runaway} studied feedback loops in predictive policing. \cite{taori2023data} showed how performativity can amplify dataset biases. \cite{martin23predatory} showed how performativity coupled with economic incentives favors false positives in predictive analytics, termed predatory predictions. 
\cite{pan2024feedback} illustrate how performativity can incentivize reward hacking in generative models with negative externalities that do not surface in accuracy evaluations. 
Going beyond statistical evaluation criteria the celebrated work of \cite{coate1993will} is an important intellectual precursor of the development of dynamic models in algorithmic fairness. We turn to model-based approaches in performative prediction next.

\section{Model-based approaches to performative prediction}
\label{sec:model-based}

It can be appealing to model the distribution map for building a more precise understanding of the distribution shift and for incorporating structural assumptions and context-specific knowledge. This can serve more efficient optimization, but also to foresee consequences of future model deployments without the need to interact with the environment.
In this section we discuss several prominent models for the distribution map from the literature that all postulate a different mechanism behind performativity.

\subsection{Predictions as a mediator for performativity}
Recall the motivating example from Section~\ref{sec:GMStheorem}.
In many economic settings, a natural mediator for performativity are predictions. Meaning that two models that result in the same predictions share the same distribution map, irrespective of the model parameters that gave rise to these predictions. 
Incorporating this structural assumption into the performative prediction framework \cite{mofakhami23nn} proposed a sensitivity condition defined in prediction space, rather than parameter space. They showed how such a stronger notion requires weaker regularity assumptions on the loss for convergence of retraining, relaxing the convexity assumption with respect to the model parameters. This structural constraint is particularly useful for working with modern deep learning models characterized by high-dimensional parameter vectors.

Taking one step further, \cite{mendler22causal} constrain the performative effects, mediated by predictions, to only impact the outcome variable. Self-fulfilling and self-negating prophecies are instances of this model. The assumption is formalized using the following causal diagram for the data generating process:

\begin{figure}[h!]
\begin{center}
\begin{tikzpicture}[>=latex',circ/.style={draw, shape=circle, node distance=1cm, line width=1pt}]
    \node[circ] (x) at (0,0) {$X$};
    \node[circ] (y) at (2,0) {$Y$};
    \node[circ] (S) at (1,1.3) {$\hat Y$};
    \path[style={->,line width=1pt}] (x) edge (S);
    \path[style={->,line width=1pt}] (S) edge (y);
    \path[style={->,  line width=1pt}] (x) edge (y);
\end{tikzpicture}
\end{center}
\end{figure}

Let $X=\xi_X$ with $\xi_X\sim\cD_x$ denoting the feature vector that is drawn iid.~from a static feature distribution. The prediction is determined as $\hat Y =f_\theta(X)$ and the outcome is determined from $X$ and $\hat Y$ as $Y=g(X,\hat Y,\xi_Y)$ where $\xi_Y$ denotes an exogenous noise variable. In this model the structural equation $g$ encodes the mechanism of performativity. In words, we assume performative effects are mediated by the prediction $\hat Y$ and they impact the outcome variable $Y$, while features $X$ are unaffected by prediction. The model was later referred to as \emph{outcome performativity} by \cite{kim2023making}.

\paragraph*{Causal models, learning and evaluation.} Once a model of $\cD(\theta)$, such as the structural equation $g$ is known, this model can be used to anticipate the deployment of future candidate models $\theta\in\Theta$. It serves to evaluate the performative risk offline and compare models alongside different evaluation criteria and loss functions. Towards this goal, \cite{mendler22causal} focus on learning the structural equation $g$  from data collected under previous model deployments. More precisely, they treat the prediction as an additional input feature when learning a model for the outcome. This approach, termed \emph{predicting from predictions}, lifts the dynamic problem of performative prediction to a supervised learning problem of predicting $Y$ from $(X,\hat Y)$. The question whether such a data-driven approach can recover the true underlying causal mechanism~$g$ is a question of whether the training data permits causal identifiability~\citep{pearl09}. \cite{mendler22causal} emphasize the need to record predictions during data collection as a prerequisite to causal identifiability, and identified natural settings where causal identifiability can be established, given triplets $(X,\hat Y, Y)$. This includes scenarios where the data was collected under the deployment of a prediction function $f_\mathrm{collect}$ that was either appropriately randomized, had discrete outputs, or its functional complexity was sufficiently high relative to the true underlying concept. However, if $f_\mathrm{collect}$ is a continuous and deterministic prediction function, these guarantees cease to hold. 
Thus, causal identifiability of performative effects poses an important obstacle in practice when learning from observational data.~\looseness=-1

Assuming access to data collected under a randomized predictor, \cite{kim2023making} studied the task of learning the structural equation $g$ through the lens of outcome indistinguishability~\citep{dwork21outcomeind}; a weaker condition compared to causal identification that is specific to a down-stream learning goal. A model for the structural equation $g$ is termed a \emph{ performative omnipredictor} with respect to a class of loss functions $\mathcal L$ and a hypothesis class $\cF$ if it is sufficient to identify the performative optimal model within a model family $\mathcal F$ for any $\ell\in\mathcal L$. The optimality concept of an omnipredictor is adapted from the supervised learning setting where it was introduced by \cite{gopalan21omnipredictor}.
\cite{kim2023making} showed that, irrespective of the complexity of $g$, for sufficiently simple classes $\mathcal L$ and $\mathcal H$ there exists a simple omnipredictor. 

\cite{perdomo2023difficult} apply the approach of \cite{mendler22causal} to test for performativity in a major evaluation of an early warning system for high school dropout used across the state of Wisconsin. If the early warning system were effective, predictions of dropout risk should be predictive of the actual outcome and have an effect that is self-negating.

The development of more versatile methods to test for performativity beyond asking for causal identification is an interesting direction for future work. As an avenue forward, \cite{cheng2023causal} draw on connections to control theory for estimating performativity from repeated interactions of users with a platform.

\subsection{Strategic classification and microfoundations}
\label{sec:strat-class}

An alternate set of assumptions underlies the model of strategic classification~\citep{bruckner2011stackelberg, hardt16strategic}.  In this line of work, the response of a population to a predictor consists of strategic feature modifications by individuals. These modifications derive from standard economic assumptions of a rational, representative agent, maximizing their utility function. These assumption sometimes go by the name of \emph{microfoundations}. The literature on this topic is vast and interdisciplinary,  see \cite{janssen2005microfoundations} for a starting point. The general idea of microfoundations is to derive aggregate response functions from microeconomic principles of utility theory and rational behavior.~\looseness=-1 

Concretely, in the setting of strategic classification individuals  modify their features in response to a predictor with the goal of achieving a better outcome. We assume a perfectly rational and utility maximizing agent. We further assume that all individuals strategize in the same way, which justifies considering one representative agent. The firm leads by deploying a predictor. The agent follows by changing their features with full knowledge of the deployed classifier. This sequential two player game is an instance of a Stackelberg game. The agent's action~$x(\theta)$ in response to a model~$\theta$ satisfies
\begin{equation}
x(\theta) = \argmax_{x'\in\cX} f_\theta(x')-c(x_\mathrm{orig},x')\,
\label{eq:sc}
\end{equation}
where $c(x_\mathrm{orig},x')$ denotes the cost of moving from the original features $x_\mathrm{orig}$ to the new features $x'$. Thus, the agent engages in feature manipulation so long as the benefit of a positive prediction exceeds the cost of feature manipulation. The distribution map is then determined by aggregating the agent behavior across the population.

This model can capture different types of strategic behavior around machine learning applications. Often described as \emph{gaming the classifier}, the strategic behavior can also represent attempts at self-improvement \citep{miller20strategic, kleinberg2020classifiers}. We refer to \cite{rosenfeld24strattutorial} for a comprehensive overview of recent results related to strategic classification.

\paragraph*{Micro-macro tension.}
Due to the precise characterization of the agent's response function, the strategic classification model lends itself to mathematical analysis. However, the standard microfoundations of a rational representative agent are the subject of well-known critiques in macroeconomics \citep{kirman1992whom, stiglitz2018where} and sociology \citep{collins1981microfoundations}. Similarly, \cite{jag21alt} pointed out important drawbacks of these assumptions in the context of performative prediction. In particular, standard microfoundations lead to the following problems:
\begin{itemize}
\item 
There are sharp discontinuities in the response to a decision rule that are often not descriptive of the empirical reality.
\item 
While retraining converges to stable points under natural conditions on the cost function, if only a positive fraction of agents act non strategically stable points cease to exist.
\item
Performatively optimal decision rules derived from standard microfoundations maximize a measure of negative externality known as social burden~\citep{milli2019social} within a broad class of alternative assumptions about agent behavior.
\end{itemize}

\cite{jag21alt} propose a noisy response model, inspired by smoothed analysis~\citep{spielmann09smoothedanalysis}, that leads to more robust conclusions about performative stability and less conservative performative optima. Finding the right interface between micro modeling and expected risk minimization as a macro concept, remains a major open challenge.

\subsection{Parametric models}

Taking a more abstract approach, \cite{miller2021outside} study distribution maps $\cD(\theta)$ that can be expressed in parametric form. In particular, they study location-scale families where samples $z\sim\cD(\theta)$ can be expressed as $z=z_0+\mu\theta$ with $z_0\sim\cD_0$ being a sub-Gaussian random variable and~$\mu$ denoting the location map. Such a parametric model is particularly appealing as it leads to desirable convexity properties of the performative risk.
Similar in spirit, \cite{izzo21gd} use an exponential family to model the distribution map.  They model the distribution map as a mixture of $K$ normal distributions as $\cD(\theta)=\sum_{i=1}^k \alpha_i \mathcal N(\mu_i(\theta), \Sigma_i)$ where means depend on the deployed model and covariances are fixed, and $\alpha_i>0$ are positive coefficients summing to one.

\paragraph*{Model misspecification.} \cite{lin2023plugin} characterized regimes where parameterized models, even if misspecified, can be beneficial for learning under performativity. In particular, they show that in the regime of finite samples, model-based approaches can find near performative optimal points more efficiently compared to structure-agnostic approaches such as discussed in Section~\ref{sec:po}. To provide intuition for this claim, let $\cD_\beta(\theta)$ be a parameterized model for the distribution map $\cD(\theta)$, and denote $\hat\theta_{\mathrm{PO},\beta}$ the performative optimum computed under this model. Let $\hat \beta$ be a finite sample estimate of $\beta$. Then, the excess risk of a model-based approach decomposes into a model misspecification term and a statistical error term, and it is bounded by
\[\PR(\hat\theta_{\mathrm{PO},{\hat\beta}})-\PR(\thetaPO)\leq \mathrm{MisspecErr} + \tilde O\left(\frac 1 {\sqrt{n}}\right)\]
While the misspecification error is irreducible, the statistical error vanishes at a rate $\tilde O(n^{-\frac 1 2})$. In contrast, the bandit algorithm discussed in Section~\ref{sec:bandit} does not suffer from model misspecification error, but has an exceedingly slow statistical rate of $\tilde O(n^{\frac 1 {1+d}})$. Thus, if the misspecification error is not too large, and data is scarce, model-based approaches to performative prediction can outperform model-free methods.

\section{Power and the strength of performativity}
\label{sec:power}

Morgenstern already knew that performativity was a consequence of authority and reach. He recognized that the strength of performativity varied from the case of a monopoly forecast to the case of multiple competing forecasts \cite[p.111]{morgenstern28}.
Put differently, the extent to which predictions influence the course
of events depends on economic conditions and  \emph{the power} of those who make the prediction. The traffic predictions of a service used by millions of drivers have the ability to significantly shape traffic. Were a competing service to enter the market, its predictions would affect a smaller user base and hence show a lesser effect on traffic patterns. This in turn implies that the optimization problem, and the best prediction for the two scenarios can differ widely. As we discussed earlier, performativity rewrites the rules of prediction, and these rules depend on the strength of performativity.

The relationship between economic power and performativity is the starting point for the definition of \emph{performative power} by \cite{hardt2022power} that we turn to next.


\subsection{Performative power}
Performative power measures the strength of performativity. In other words, it quantifies the ability of a platform to steer a population of participants. Returning to our example of traffic prediction, performative power is concerned with how much a platform can influence traffic patterns through changes to its published predictions.

To formally set up the definition, fix a set~$\cU$ of participants interacting with a firm under investigation. Each unit~$u\in\cU$ is associated with a data point~$z(u)$. Fix a metric $\distance(z,z')$ over the space of data points. Fix a set of actions $\cF$ for the firm you want to consider. We typically think of an action~$f\in\cF$ as the deployment of a specific algorithmic change at a fixed point in time. For each participant~$u\in\cU$ and action~$f \in\cF$, we denote by $z_f(u)$ the potential outcome random variable representing the counterfactual data of participant~$u$ if the firm were to take action~$f$.

\begin{definition}[Performative Power]
\label{def:performativepower}
Given a population $\cU$, an action set $\cF$, potential outcome pairs $(z(u),z_f(u))$ for each unit~$u\in\cU$ and action~$f\in\cF$, and a metric $\distance$ over the space of data points, we define the 
\emph{performative power} of the firm as
\[
\PP:=\sup_{f \in \cF}\; \frac 1 {|\cU|} \sum_{u \in \cU} \E\left[\distance\left(z(u),z_f(u)\right)\right],
\]
where the expectation is over the randomness in the potential outcomes.
\end{definition}
%



Digital platforms typically operate in two-sided markets mediating between content creators or service providers and consumers. The definition applies to both sides of the market. We could study how much a change in a ranking algorithm influences the wellbeing of video producers. Or we could study how much the positioning of search results steers visitors toward certain product offerings. The definition is flexible in quantifying diverse power relationships. The action and the outcome variable do not even need to concern the same market. Performative power can be applied forward-looking to understand whether a platform has the ability to plausibly cause a specific change, as well as in retrospect to quantify the effect of an observed conduct.

Before we turn to the question of empirical estimation and economic relevance, we collect some useful theoretical properties that the definition enjoys.

\subsection{Theoretical properties} 

Performative power can be linked back to performative prediction. Given the action set corresponds to the entire parameter space $\cF=\Theta$, and the outcome variable to the observed data point $z=(x,y)$, then the benefit of steering is bounded by performative power. More precisely,  under appropriate regularity conditions, any supervised learning solution $\theta^*$ with respect to $\cD(\phi)$ for $\phi$ being some previously deployed model, satisfies
\[\PR(\theta^*)-\PR(\thetaPO)< \xi \cdot \PP\,.\]
where $\xi\geq 0$ depends on the Lipschitz constant of the loss.
This inequality illustrates how performative power bounds the extent to which a firm can, in principle, utilize the mechanism of steering to its own advantage. However, note that it is an upper bound that is not necessarily tight for every loss function. But what it tells us is that a firm with zero performative power is confined to optimizing its objective against current conditions. A firm with positive performative power, however, can gain from changing conditions. This dichotomy is analogous to the distinction between a \emph{price taker} and a \emph{price maker} in economics.

\paragraph*{Performative power and competition.}
\cite{hardt2022power} further investigated the connections between performative power and standard economic concepts from competition. To this end they study a simple microfoundation model similar to the one we saw in Section~\ref{sec:strat-class}, extending it to a multi-decision maker setting in a straight forward way. They found:

\begin{enumerate}
\item A monopoly firm maximizes performative power. Participants are willing to incur a cost for feature manipulation equal to utility for the service. At the same time, outside options decrease performative power.
\item A firm’s ability to personalize predictions increases performative power.
\item When multiple platforms compete over users with services that are perfect substitutes, then already two firms lead to zero performative power. This result is analogous to the Bertrand competition \citep{bertrand1883theorie} where two competing firms setting prices for identical goods simultaneously are sufficient for zero profit, i.e., competitive prices.
\end{enumerate}

These propositions suggest that the definition is appropriately sensitive to some key concepts from the study of competition. The mode of investigation here is to see how the definition squares against familiar economic concepts, at the example of social burden as a metric of potential negative externalities in strategic classification. An interesting avenue for future work is to deepen the study of performative power in the context of established economic models. For example, it may be insightful to understand how performative power behaves under market entry, mergers and acquisitions, and common ownership.

\subsection{Practical advantages}
Emerging from performative prediction, performative power is a notion of power tailored to digital platforms. As an investigative tool it has the potential to illuminate settings where traditional concepts of market power are difficult to apply. There is extensive discussion about the difficulty of applying classical tools to digital economies \citep{cremer2019competition, stigler19}. An important hurdle being behavioral biases and weaknesses, as well as the reliance on prices and mathematical specifications for the market. Against this backdrop, performative power as a causal statistical notion offers a promising alternative. The concept can be applied to any outcome variable of interest, beyond prices, without presupposing a precise understanding of the complex mechanism behind the market in which the firm operates. While the dynamic process that generates the potential outcome~$z_f(u)$ may be complex, this complexity does not enter the definition. 
At the same time, performative power offers a concrete path towards integrating online experiments with digital market investigations. However, the legal relevance of the notion depends on its instantiation.

\subsection{Measuring performative power via causal inference}
An important criteria for a useful definition is the feasibility to instrumentalize it. Having instantiated the sets~$\cF$ and~$\cU$, estimating performative power amounts to causal inference, involving the potential outcome variables~$z_f(u)$ for unit $u\in\cU$ and action $f\in\cF$. The definition takes a supremum over possible actions a firm can take at a specific point in time. We can therefore lower bound performative power by estimating the causal effect of any given action $f\in\cF.$ 
In an experimental design, the investigator chooses an action and deploys it to estimate the effect. In an observational design, an investigator is able to identify the effect and estimate performative power from available data without an experimental intervention on the platform. 

\paragraph*{Content positioning in digital services.}
To illustrate the definition more concretely, we consider a digital platform that uses a ranking algorithm to place content in one of several positions. Through this example we outline a general strategy to derive a lower bound on performative power. 
Formally, we assume that there are $c$ pieces of content $\mathcal{C} = \{0, 1, 2,\dots, c-1\}$ that the platform can present in $m$ display slots. We focus on the case of two display slots ($m=2$) since it already encapsulates the main idea. The first display slot is more desirable as it is more likely to catch the viewer's attention. This is quantified through the causal effect of position.~\looseness=-1

\begin{definition}[Causal effect of position]
Let the treatment $T\in\{0,1\}$ be the action of flipping the content in the first and second display slots for a viewer $u$, and let the potential outcome variable~$Y_t(u)$ indicate whether, under the treatment~$T=t$, viewer~$u$ consumes the content that is initially in the first display slot. We call the corresponding average treatment effect
\[\beta = \Big|\frac{1}{|\cU|}\sum_{u\in\cU}\mathbb{E}\left[Y_1(u) - Y_0(u)\right]\Big|\]
the \textit{causal effect of position} in a population of viewers $\cU$, where the expectation is taken over 
the randomness in the potential outcomes. 
\label{def:display-effect}
\end{definition}

Researchers have investigated the causal effect of position on consumption in various contexts; often via quasi-experimental methods such as regression discontinuity designs, but also through experimentation in the form of A/B tests. 
To give an example, \citet{NK15} used a regression discontinuity design to estimate the causal effect of position in search advertising, where advertisements are displayed across a number of ordered slots whenever a keyword is searched. They measured the causal effect of position on click-through rate of participants. \cite{mendler2024engine} designed a randomized online experiment based on a browser extension to estimate the effect of content positioning on clicks in the context of online search. 

The following theorem relates the causal effect of position to performative power. It assumes that firm actions are exposed to users through content arrangements. Given a scoring function, the firm ranks and assigns content items to display slots. Thus, given a set of actions containing local perturbations to the score function that exceed the score value gap between two adjacent elements, the causal effect of position lower bounds performative power provided that units don't interfere. More concretely, let $\cU$ be the population of platform users. Let $z(u)$ be the probability that user $u\in\cU$ clicks on item $1$. Let the set of actions compose of local perturbations to the scoring function as described above. Then, the following  holds.

\begin{theorem}[\cite{hardt2022power}]
\label{thm:swapping}
Let $\PP$ be performative power as instantiated above. Assume no interference between experimental units. Then, performative power is at least as large as the causal effect of position $\beta\le \PP\,.$
\end{theorem}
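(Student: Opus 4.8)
The plan is to exhibit a single action $f^\star\in\cF$ whose potential outcomes reproduce the position-flip treatment at the level of each individual user, and then to bound the supremum defining $\PP$ from below by the per-action average distance evaluated at $f^\star$. Everything reduces to a triangle inequality once this single action is identified.

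First I would construct $f^\star$. By assumption $\cF$ contains local perturbations of the scoring function whose magnitude exceeds the score gap between two adjacent content items. Applying such a perturbation to the items occupying the top two slots reorders exactly those two items, i.e.\ it swaps the content in the first and second display slots while leaving every other arrangement fixed. This is precisely the intervention $T=1$ underlying Definition~\ref{def:display-effect}, with the unperturbed scoring function corresponding to $T=0$. The action set is rich enough to contain this swap, so $f^\star$ is a legitimate competitor in the supremum.

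Next I would match the two notions of data point. Taking $\distance$ to be the absolute difference of click probabilities on $[0,1]$, the baseline data point is $z(u)=\E[Y_0(u)]$, the probability that user $u$ consumes item $1$ while it sits in the first slot, and the data point under $f^\star$ is $z_{f^\star}(u)=\E[Y_1(u)]$, the probability that the same user consumes item $1$ after it has been moved to the second slot. The no-interference assumption is exactly what licenses this identification: it guarantees that the counterfactual click behaviour of user $u$ under the firm-level action $f^\star$ depends only on the arrangement that $u$ is shown, so the population-level potential outcome $z_{f^\star}(u)$ coincides with the single-unit treatment outcome $Y_1(u)$.

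Finally, I would chain the inequalities. Since $\PP$ is a supremum over $\cF$ it is at least the average distance attained at $f^\star$, and then Jensen's inequality for the convex map $|\cdot|$ (the average of absolute values dominates the absolute value of the average) yields
\[
\PP \;\ge\; \frac{1}{|\cU|}\sum_{u\in\cU}\bigl|z(u)-z_{f^\star}(u)\bigr|
\;=\;\frac{1}{|\cU|}\sum_{u\in\cU}\bigl|\E[Y_0(u)-Y_1(u)]\bigr|
\;\ge\;\Bigl|\frac{1}{|\cU|}\sum_{u\in\cU}\E[Y_1(u)-Y_0(u)]\Bigr|
\;=\;\beta .
\]
The only genuinely delicate step is the middle one: the reduction from the firm's global action to a per-user treatment. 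I expect the main obstacle to be stating the no-interference (SUTVA-type) condition precisely enough that $z_{f^\star}(u)=\E[Y_1(u)]$ holds exactly rather than approximately, since without it the global perturbation could in principle alter the arrangements seen by other users and thereby decouple $z_{f^\star}(u)$ from the isolated position flip. The bound into the supremum and the passage from averaged absolute values to the absolute value of the average are then routine.
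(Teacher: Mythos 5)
Your proposal is correct and follows essentially the same route as the paper's own argument: exhibit a single score-perturbation action $f^\star\in\cF$ that swaps the top two slots, invoke the no-interference assumption to extrapolate the per-user treatment effect to the simultaneous firm-level deployment, and lower bound the supremum in $\PP$ by the average distance under $f^\star$. The concluding triangle-inequality step (average of absolute values dominating the absolute value of the average) is the routine finish the paper leaves implicit, and your identification of the no-interference condition as the delicate point matches the paper's emphasis exactly.
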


The proof idea is to show that the action set $\cF$ contains at least one action that results in flipping the position from the first to the second content item. Then, to show that performative power is large, we need to apply the firm's action simultaneously to all units in the population. If units were to interfere, then it would not be clear that the causal effect extrapolates from a single unit to the whole population. This is what the no interference assumption takes care of. In the context of search advertising, this assumption concretely means that one user's clicks don't influence another user's clicks. This is expected to hold within a short time window as users generally do not observe each other's clicks.

The same argument underlying Theorem~\ref{thm:swapping} of first choosing one action to lower bound the supremum and then relating this update to a display change could be applied to any other $f\in\cF$ beyond flipping the order of two adjacent positions. Depending on the use case alternative choices of $f$ could lead to a tighter lower bound. We discuss one use case below.

\subsection{Integrating performative power with digital market investigations}

Questions about how much a platform can influence participants through its algorithmic actions are at the heart of several recent antitrust investigations into digital platforms. Prominently, cases of self-preferencing allege that platforms used positioning strategically to steer users toward their own product offerings~\citep{cremer2019competition, stigler19}. 
To quote from a major antitrust case by the European Commission against Google:

\begin{quote}
[T]he General Court [of the European Union] finds that, by favouring its own comparison shopping service on its general results pages through more
favourable display and positioning, while relegating the results from competing comparison
services in those pages by means of ranking algorithms, Google departed from competition
on the merits.\footnote{General Court of the European Union. Press Release No 197/21.}
\end{quote}

\noindent
Similarly, a recent complaint by the Federal Trade Commission\footnote{Federal Trade Commission. Case 2:23-cv-01495} against Amazon includes among its allegations the claim that

\begin{quote}
    Amazon deliberately steers shoppers away from offers not featured in the Buy Box. 
\end{quote}

\noindent
Note the invocation of the notion of \emph{steering} to describe Amazon's practices. Performative power gives a rigorous and effective set of tools to support and quantify precisely such claims.

To illustrate the role of performative power, take the Google Shopping case\footnote{European Commission, AT.39740, Google Search (Shopping), 27.06.2017.} as an example. It is concerned with the ability of Google to distort competition in the market of comparison shopping services by changed to its search service. 
Accordingly, the relevant instantiation of performative power spans two vertically integrated markets. It quantifies Google's ability to steer \emph{incoming} traffic to a comparison shopping service by means of algorithmic actions that determine where the service appears on Google search relative to its competitors. 
Theorem~\ref{thm:swapping} speaks to performative power as the ability to steer the behavior of participants on a platform through content arrangement. To factor in that only a subset of the relevant web traffic is mediated by Google search, we can build on the decomposition property of performative power. Namely, under the conditions of Theorem~\ref{thm:swapping}, for any pair of populations $\cU,\cU'$ with $\cU'\subseteq \cU$, it holds that $\PP(\cU)\geq\alpha \PP(\cU')$ with $\alpha=|\cU'|/|\cU|$.
Thus, for extending empirical insights from on-platform behavior of participants to broader claims about power it remains to quantify the portions of traffic the measurements concern. Such information can typically be obtained from public web traffic data. Let us make this more concrete using the following thought experiment sketched by \cite{mendler2024engine}:

\textit{Through online experiments the authors found that distortion of traffic at the first position can be as large as $66\%$ through consistent down ranking. Suppose, $80\%$ of the search referrals to a site come from Google Search.\footnote{Up to $82\%$ of incoming traffic to CS services in the European Economic Area (EEA) was mediated by Google search in 2012,  reported in the Google Shopping case decision (Section 7.2.4.1, Table 24).} Further, assume that $70\%$ of the referrals from Google happen while the service is ranked among the top two generic search results. Assuming for the second position the traffic distortion is $20\%$ smaller compared to the first position, giving a conservative average effect size of $\beta=0.8\cdot 0.66$. Multiplying the effect size by the fraction of incoming clicks it concerns, we get $0.8\cdot 0.7\cdot 0.8\cdot 0.66\approx 30\%$. This is the fraction of traffic to the competitor's site Google can influence. }

This number provides a conservative lower bound on performative power of online search for a legally and economically relevant instantiation. Furthermore, it offers an interpretable measure for an investigator to judge whether a firm's power to steer traffic should be a concern for competition or not. The calculation also illustrates that for performative power to be large neither control over local traffic, nor a large market share alone are sufficient but both factors need to be large. We can use the same logic to compare search engines, and assess the effectiveness of remedies.  
More broadly, performative power offers an avenue to integrate online experimentation and insights from performative prediction with ongoing legal debates related to the power of digital platforms.

\section{Algorithmic collective action}
\label{sec:collective-action}

In the previous section we have discussed how much a firm can steer a population through its choice of predictions, and linked it to the benefits for the firm. In this section we investigate how participants can use the knowledge that platforms perform statistical learning to their advantage and ask the questions: How can a fraction of the population adapt their behavior to steer the predictions of a firm towards desirable outcomes?

\cite{vincent21datalever} studied this question from the perspective of data leverage as a way to reduce the effectiveness of lucrative technologies. Participants on a platform strategically modify their data so as to influence the behavior of the predictor learned by the platform. The more individuals participate in the collective, the larger their leverage. One motivation for such collective strategies is to improve the conditions of gig workers on digital platforms \citep{chen2018thrown, wood2019good, gray2019ghost, schor2020dependence, schor2021after}. For example, recent work in machine learning has focused on strategies to increase underrepresented artists' exposure on music streaming platforms~\citep{baumann2024algorithmic} or raise prices on ride hailing platforms~\citep{sigg2024decline}.

\paragraph*{Mathematical model.}
\cite{hardt2023algorithmic} proposed a theoretical model to study algorithmic collective action in machine learning more generally. The size of the collective is specified by a value~$\alpha\in (0,1]$ that corresponds to the fraction of participating individuals in a population drawn from a base distribution~$\cP_0.$ The firm observes the mixture distribution
\[\cP=\alpha \cP^* + (1-\alpha) \cP_0\,,\]
and runs a learning algorithm $\cA$ on $\cP$, where $\cP^*$ depends on the strategy of the collective. We illustrate the case of an optimal firm that has full knowledge of the distribution $\cP$. The firm chooses the Bayes optimal predictor~$f=\cA(\cP)$ on the distribution~$\cP$.

\subsection{Effectiveness of signal planting strategy}

In the context of classification, a natural objective for the collective is to either induce or erase correlations between features and labels in the training data. We focus on the former. The collective correlates a signal $g(x)$ applied to a data point $x$ with a target label $y^*.$  To do so, given a data point $(x,y),$ the collective applies the signal function $g(\cdot)$ to $x$ and switches the label from $y$ to $y^*$ at training time. That is, $\cP^*$ is the distribution of $(g(x), y^*)$ for a random draw of a labeled data point~$(x, y)$ from~$\cP_0$. In practice, the signal~$g(x)$ may correspond to adding a hidden watermark in image and video content, or subtle syntactic changes in text. It is reasonable to assume that individuals are indifferent to such inconsequential changes. Conventional wisdom in machine learning has it that such hidden signals are easy to come by in practice~\citep{liu17trojan,chen2017targeted, gu19badnet}. 

At test time, an individual given a data point $(x,y)$ succeeds if $f(g(x))=y^*$. We denote the success probability of the strategy by
\[
S(\alpha) = \Pr_{x\sim \cP_0} \left\{ f(g(x)) = y^*\right\}\,.
\]
A theorem shows that a collective of vanishing fractional size succeeds with high probability by implementing this strategy, provided that the signal $g(x)$ is unlikely to be encountered in the base distribution~$\cP_0$. We denote the density of the signal set under the base distribution by $\xi=\cP_0(\{ g(x) \colon x\in\cX\}).$

\begin{theorem}[\cite{hardt2023algorithmic}]
\label{thm:trigger-label}
Under the above assumptions, the success probability of the collective is lower bounded as:
\[
S(\alpha)\ge 1 - \frac{1-\alpha}{\alpha} \cdot \xi  \,.
\]
\end{theorem}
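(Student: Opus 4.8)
The plan is to track, at each point of the signal set, the competition between the mass the collective plants (which always carries the target label $y^*$) and the incidental mass that the base distribution $\cP_0$ happens to place there. Write $\nu$ for the pushforward of $\cP_0$ under the signal map $g$, so that $\nu$ is exactly the law of the test-time signal point $g(x)$ and $S(\alpha)=\Pr_{s\sim\nu}\{f(s)=y^*\}$. The key structural fact is that $\nu$ is simultaneously the feature marginal of $\cP^*$, since $\cP^*$ is by construction the law of $(g(x),y^*)$ for $x\sim\cP_0$. Introducing the failure set $B:=\{s:f(s)\neq y^*\}$, the whole theorem reduces to the single estimate $\nu(B)\le \tfrac{1-\alpha}{\alpha}\xi$, because $1-S(\alpha)=\nu(B)$.

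Next I would characterize failure pointwise. The firm deploys the Bayes optimal classifier $f=\cA(\cP)$ for the observed mixture $\cP=\alpha\cP^*+(1-\alpha)\cP_0$, so at any $s\in B$ the label $y^*$ fails to be a posterior maximizer and there is a competing label $y'\neq y^*$ whose $\cP$-joint mass at $s$ is at least that of $y^*$. The crucial observation is that the $\cP^*$ component contributes mass only to the label $y^*$; hence the mass of $y'$ at $s$ is entirely $(1-\alpha)\,\cP_0(s,y')$, while the mass of $y^*$ at $s$ is at least the collective's contribution $\alpha\,\nu(s)$. Chaining the failure inequality gives the pointwise bound $(1-\alpha)\,\cP_0(s,y')\ge \alpha\,\nu(s)$ for every $s\in B$ (in the non-discrete case this is read as a density inequality obtained by disintegrating $\cP$ into conditional label distributions against a common dominating measure).

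I would then integrate this inequality over $B$. On the right-hand side, integrating $\nu(s)$ over $B$ reproduces $\alpha\,\nu(B)$. On the left, I bound $\cP_0(s,y')$ by the total base mass $\cP_0(s,\cdot)$ at $s$; integrating the latter over $B$ yields the probability that a feature drawn from $\cP_0$ lands in $B$. Since $B\subseteq\{g(x):x\in\cX\}$, this is at most $\xi=\cP_0(\{g(x):x\in\cX\})$, the density of the signal set. Combining the two sides gives $(1-\alpha)\xi\ge \alpha\,\nu(B)$, which rearranges to $\nu(B)\le \tfrac{1-\alpha}{\alpha}\xi$ and hence $S(\alpha)\ge 1-\tfrac{1-\alpha}{\alpha}\xi$. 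Two sanity checks confirm the shape: $\alpha=1$ forces $S(\alpha)=1$ (all data is planted signal labeled $y^*$), and $\xi=0$ also forces $S(\alpha)=1$ (no base mass ever competes on the signal set).

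The step I expect to be the main obstacle is making the pointwise failure characterization rigorous in full generality: precisely relating the Bayes-optimal argmax comparison to a pointwise joint-mass inequality when $\cP$ admits no density, which requires the disintegration of $\cP$ and a measurable selection of the witnessing label $y'(s)$ across $B$. In the discrete or dominated setting this is immediate; in general it is routine but is where all the measure-theoretic care resides. A secondary point to state carefully is that \emph{Bayes optimal} is taken with respect to the zero--one loss, so that $f$ is the posterior-mode classifier, which is exactly what licenses the argmax comparison.
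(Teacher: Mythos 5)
Your proof is correct and is essentially the argument given for this theorem in the cited source \citep{hardt2023algorithmic}: a pointwise Bayes-posterior comparison on the signal set (the planted mass $\alpha\nu(s)$ on label $y^*$ versus the purely base-distribution mass $(1-\alpha)\cP_0(s,y')$ on any competing label), followed by a Markov-style integration of that inequality over the failure set, using $\cP_0(\text{signal set})=\xi$. Your measure-theoretic worry is also already resolved by your own device of bounding $\cP_0(s,y')$ by the feature marginal $\cP_0(s,\cdot)$, which removes any need for a measurable selection of the witnessing label.
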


The theorem has various extensions. The firm need not be optimal. The collective need not be able to change labels. Up to some quantitative loss, the result still holds.

The theorem shows how the collective can influence predictions, how this depends on the size of the collective, and what determines the effectiveness of the strategy. However, this result does not speak about actual outcomes, such as monetary rewards for the collective. Surely, the collective is also interested in the latter and not just the former. Performativity provides the necessary link between predictions and outcomes to talk about incentives and outcomes of the collective. We present this essential idea here.

\subsection{Algorithmic collective action under performativity}

Consider an outcome variable 
\[
Y = h(X) + \beta f(X) + Z\,,\]
where $Z$ is exogenous noise (i.e., independent of $X$) of mean~$0$. Think of $Y$ as the revenue earned by an offering~$X$ on a platform. Here, $f(X)$ is a binary predictor based on which the platform decides whether to promote the offering~$X$ in some form. The term $h(X)$ captures the baseline revenue that offering $X$ earns absent the platform intervention. The factor $\beta$ quantifies the strength of performativity. 

Assume the collective succeeded in finding a signal function~$g$ such that the baseline revenue~$h$ is invariant under application of the signal function, i.e., $h(g(x))=h(x)$ for all $x\in\cX.$
Let~$C$ indicate the event that the drawn data point belongs to an individual participating in the collective. The expected revenue is $\E[h(X)],$ whereas the realized revenue for the collective is $\E[Y\mid C]$. It is not difficult to show that
\begin{equation}\label{eq:revenue}
\E\left[Y \mid C\right]
- \E\left[h(X)\right] \ge  S(\alpha) \beta  \,.
\end{equation}
Assuming a rare enough signal with $\xi\le\alpha/2,$ Theorem~\ref{thm:trigger-label}
shows that the revenue increase for the collective is lower bounded by $\beta/2.$ In other words, the strength of performativity determines the realized payoff for the collective. To prove Equation~\ref{eq:revenue}, note:
\begin{align*}
    \E\left[Y \mid C\right]
    &= \E\left[h(g(X)) + \beta f(g(X)) + Z \mid C\right] \\
    &= \E\left[h(g(X))\right] + \beta \E\left[f(g(X))\right] + \E\left[ Z \mid C\right] \\
    &= \E\left[h(X)\right] + \beta \Pr\left\{f(g(X))=1 \right\}\\
    &= \E\left[h(X)\right] + S(\alpha) \beta
\end{align*}
This simple observation gives a first indication that there may be a monetary incentive to algorithmic collective action. 
Incentives and dynamics of collective action beyond machine learning have been studied extensively since \cite{olson1965logic}. It is an interesting direction for future work to study these problems in the context of predictive systems,  go beyond adversarial attack models, and ground them in economic interests. Performativity offers an important building block towards better understanding incentives in algorithmic systems, taking into account the dynamic interaction of individual behavior, predictions and outcomes.

\section{Discussion}

Performativity is a ubiquitous phenomenon in the context of prediction. We can observe it in many applications of machine learning. Predictions on digital platforms, be it for online recommendation, content moderation, or digital advertising, are one particularly rich class of examples. But we would go so
far as to conjecture that any significant social prediction problem has some
aspect of performativity to it. 

While performativity is ubiquitous, the strength of performativity has an important quantifier. It depends on the power, in terms of scale, reach, and visibility, of the predictor. This property of performativity has far-reaching consequences. If power is the cause of performativity, we can measure power through the strength of performativity. This principle forms the basis of a formal notion of power in the context of prediction.

Performativity is also a notion that we can use to talk about what machine learning systems do and ought to be doing. As such, performativity becomes a criterion in the design and evaluation of machine learning systems in social contexts.  In the same way that we think about machine learning systems in terms of other criteria, such as accuracy or computational efficiency, we propose adding performativity to the principal axes along which we scrutinize predictive systems. 

When predictions have consequences, the predictor becomes a point of intervention. This cuts two ways. On the one hand, a firm or an institution can use predictions toward social and economic objectives they pursue. On the other hand, individuals can leverage their data to influence social outcomes through the predictor trained on the data.

Performative prediction is a framework to study performativity as a phenomenon and to formalize the notion within the context of supervised learning. The framework establishes the necessary formal language and definitions for its dual use in the study of performativity as a phenomenon and a criterion. Compared with traditional supervised learning, there are two ways to be good at performative prediction. One is the familiar mechanism to learn patterns on current data. The other is to \emph{steer} the data-generating process in a different direction.  
A firm without performative power only has the first lever, whereas a firm with high performative power may benefit from steering the population towards more profitable behavior. This is a distinction classical supervised learning can't make.

Our conception of performative prediction excludes some important interdisciplinary perspectives. The celebrated work of Mackenzie \citep{mackenzie2007economists, mackenzie2008engine}, for example, speaks to the performativity of economic theory. Here it is the practice of applying economic theory in financial markets and not the model's predictions per se that are performative. 
Another notable form of performativity that is not captured by our framework is Hacking's looping effect. \cite{hacking1995looping} describes how classification creates and changes social categories in an ongoing loop. This effect arises in machine learning applications as well, but is not adequately captured in our formalism. There is a vast literature on performativity in philosophy and sociology starting with the works of \cite{austin1975things}, \cite{merton1948self}, and \cite{buck1963reflexive} who called the problem \emph{reflexive prediction}, that we cannot survey here. See \cite{mackinnon2005reflexive, maki2013performativity, khosrowi2023managing} for additional pointers and discussion.

Although the definition of performative prediction is recent, work in this area already spans a remarkable scope.  Questions of strategic behavior, algorithmic fairness, prediction as intervention, power in digital economies, and collective action in algorithmic systems, to name a few, all benefit from the perspective of performative prediction. The problems that arise are technically rich and thrive on beautiful connections with optimization, statistics, economics, causal inference, and control theory. We hope that this article provided a helpful entry point that has enticed the reader to this emerging area.

\newpage

\bibliographystyle{plainnat} 
\bibliography{references}

\end{document}